\documentclass[11pt,english]{article}

\usepackage[utf8]{inputenc}
\usepackage{amsmath}
\usepackage{amsthm}
\usepackage{soul}
\usepackage{amssymb}
\usepackage{amsfonts}
\usepackage[backend=biber, style=authoryear, maxbibnames=999, maxcitenames=1]{biblatex}
\usepackage{comment}
\usepackage{mathtools}
\usepackage{hyperref}
\usepackage{algorithm}
\usepackage{algpseudocode}
\usepackage{graphicx}
\usepackage[font=footnotesize]{caption}
\usepackage{xcolor}
\usepackage{enumitem}
\usepackage{nicefrac}
\usepackage{csquotes}



\addbibresource{bibliography.bib}

\newtheorem{assumption}{Assumption}
\newtheorem{remark}{Remark}
\newtheorem{theorem}{Theorem}

\newtheorem{lemma}{Lemma}
\newtheorem{definition}{Definition}
\newtheorem{example}{Example}
\newtheorem{proposition}{Proposition}

\usepackage[T1]{fontenc}
\usepackage{babel}

\author{
  Dragana Bajovic\\
  Faculty of Technical Sciences, University of Novi Sad, Novi Sad, Serbia\\
  \texttt{dbajovic@uns.ac.rs}
  \and 
  Dusan Jakovetic\\
  Faculty of Sciences, University of Novi Sad, Novi Sad, Serbia\\
  \texttt{dusan.jakovetic@dmi.uns.ac.rs}
  \and
  Soummya Kar\\
  Carnegie Mellon University, Pittsburgh, PA, USA\\
  \texttt{soummyak@andrew.cmu.edu}
\thanks{ 
This work is supported by the European Union’s Horizon 2020 Research and Innovation program under grant agreement No 957337. The paper reflects only the view of the authors and the Commission is not responsible for any use that may be made of the information it contains.}
}

\title{Large deviations rates for stochastic gradient descent with strongly convex functions}
\date{\vspace{-7ex}}

\begin{document}
\maketitle

\begin{abstract}
Recent works have shown that high probability metrics with stochastic gradient descent (SGD) exhibit informativeness and in some cases advantage over the commonly adopted mean-square error-based ones. In this work we provide a formal framework for the study of general high probability bounds with SGD, based on the theory of large deviations. The framework allows for a generic (not-necessarily bounded) gradient noise satisfying mild technical assumptions, allowing for the dependence of the noise distribution on the current iterate. Under the preceding assumptions, we find an upper large deviations bound for SGD with strongly convex functions. The corresponding rate function captures analytical dependence on the noise distribution and other problem parameters. This is in contrast with conventional mean-square error analysis that captures only the noise dependence through the variance and does not capture the effect of higher order moments nor interplay between the noise geometry and the shape of the cost function.  We also derive exact large deviation rates for the case when the objective function is quadratic and show that the obtained function matches the one from the general upper bound hence showing the tightness of the general upper bound. Numerical examples illustrate and corroborate theoretical findings. 
\end{abstract}

\section{Introduction}

The large deviations theory represents a well-established principled approach for studying \emph{rare events} that occur with stochastic processes, e.g.,~\parencite{DemboZeitouni93}. Typically, 
 we are concerned with a sequence of rare events $E_k$ related with the stochastic process of interest, 
 indexed by, e.g., time~$k$. In this setting, the probability of  event $E_k$,  $k=1,2,...$ typically decays exponentially in~$k$; the large deviations theory then enables to quantify this exponential rate.  Such an approach has found many applications in statistics~\parencite{Bucklew90},  mechanics~\parencite{Touchette2009LDStatMechs}, communications~\parencite{SchwartzWeiss95}, and information theory~\parencite{Cover91}.

 To be more concrete, consider an example of a sequence of random vectors 
 $X_k$ taking values in ${\mathbb R}^d$ that converge, e.g., almost surely, to 
 a (deterministic) limit point $x^\star \in {\mathbb R}^d$. 
 The rare event of interest $E_k$ can then be, for example, 
 $E_k =\{ \|X_k-x^\star\|\geq \delta\}$, for some 
 positive quantity $\delta$, with $\|\cdot\|$ denoting the Euclidean norm. 
  Equivalently, $E_k$ can be represented as $\{X_k \in C_{\delta}\}$, where $C_{\delta}$ is the complement of the $l_2$ ball of radius  $\delta$ centered at $x^\star$. Large deviations analysis then aims at discovering the corresponding rate of decay, i.e., the inaccuracy rate $\mathbf I(C_\delta)$:
\begin{equation}
\label{eq-rate-objective}
\mathbb P\left(X_k\in C_{\delta}\right) = e^{-k\, \mathbf I(C_\delta)+o(k)},
\end{equation}
where $o(k)$ denotes terms growing slower than linearly with $k$. The inaccuracy rate $\mathbf I(C_\delta)$ can usually be expressed via the so called \emph{rate function} $I: \mathbb R^d \mapsto \mathbb R$, according to the following formula~\parencite{Bahadur60}:
\begin{equation}\label{eq-set-fcn-via-rate-fcn}
\mathbf I(C_\delta) = \inf_{x\in C_{\delta}} I(x).
\end{equation}
Differently from the set function $\mathbf I$, the rate function $I$ does not depend on the region $C_\delta$; that is, when $C_\delta$ changes, only the region over which we minimize in~\eqref{eq-set-fcn-via-rate-fcn} changes, while the function remains unchanged. Furthermore, this  is true for arbitrary set $C_{\delta}$. This means that, once the rate function is computed, the corresponding inaccuracy rate can be obtained  via~\eqref{eq-set-fcn-via-rate-fcn} for a new given region of interest.

In this paper, we are interested in applying the large deviations theory to analyzing the stochastic gradient descent (SGD) method. SGD is a simple but widely used optimization method that finds numerous practical applications, such as  training machine learning and deep learning models, e.g.,
\parencite{niu2011hogwild,gorbunov2020unified,lei2020adaptivity}. More precisely, we consider unconstrained optimization problems where the goal is to minimize a smooth, strongly convex function $f: \,{\mathbb R}^d \rightarrow \mathbb R$, via the SGD method of the form:
\begin{equation}
    \label{eqn-SGD-basic}
    X_{k+1} = X_k - \alpha_k \,(\nabla f (X_k)-Z_k).
\end{equation}
Here, $k=1,2,...$ is the iteration counter, $\alpha_k=a/k$, $a>0$ is the step-size, and $Z_k$ is a zero-mean 
gradient noise that may depend on~$X_k$. In this context, we are interested in 
 solving for \eqref{eq-rate-objective} and \eqref{eq-set-fcn-via-rate-fcn} for the 
 SGD method~\eqref{eqn-SGD-basic}, where now $x^\star$ is interpreted as the (deterministic) global minimizer of~$f$. In other words, 
 we are interested in finding (or approximating) the rate function~$I(x)$ 
 that quantifies the ``tails'' or ``rare events'' of how the SGD sequence iterates
  $X_k$ deviate from the solution~$x^\star$. 

Clearly, evaluating~\eqref{eq-set-fcn-via-rate-fcn} for SGD is of significant interest. It readily provides insights into the high-probability bounds for SGD that have been subject of much research effort recently, \parencite{REF17,REF18,REF23,gorbunov2020stochastic,davis2021low}. However, unlike the typical high probability bound studies, the large deviations approach here  is fully flexible with respect to the choice of set $C_{\delta}$; e.g., the $l_2$-ball complement may be replaced with an arbitrary open set, such as $l_p$ norm complement of an arbitrary $l_p$-norm. 
 While large deviations theory is a well-established field, there has been a limited body of work that applies large deviations to the analysis of SGD. 
  Reference~\parencite{Woodroofe} is concerned with large deviations analysis for a scalar stochastic process equivalent to SGD in one dimension. 
  The authors of \parencite{WuLDP} study large deviations of SGD when the step-size converges to zero; however, they are not concerned with large deviations when 
  the iteration counter~$k$ increases -- the case of our interest here.

\textbf{Contributions}. In this paper, we are interested in evaluating 
 the large deviations rates in~\eqref{eq-rate-objective}
  and~\eqref{eq-set-fcn-via-rate-fcn} 
 for the SGD method, when the objective function $f$ is smooth and strongly convex.  Our main contributions are as follows. When 
  $f$ is a (strongly convex) quadratic function, 
  we establish the so-called full large deviations principle for 
  the sequence~$X_k$. This means that we evaluate rate function $I(x)$
   exactly, i.e., the corresponding rare event probability is computed exactly, 
   with upper and lower bounds matched, up to exponentially decaying factors. 
   We further explicitly quantify the rate function $I(x)$ as a function 
   of the distribution of the gradient noise. This reveals 
   a significant influence of higher order moments on the 
   performance (in the sense of rare event probabilities)
    of SGD. This is in contrast with conventional SGD analyses, 
    that typically capture only 
    the dependence on the gradient noise variance. 
  The large deviations principle for quadratic functions is established 
  under a very general class of gradient noise distributions that are essentially only required to have a finite moment generating function. 
  Next, for generic smooth and strongly convex costs $f$, 
  we establish a large deviations upper bound (a lower bound on function $I(x)$)
 that certifies an exponential decay of the rare event probabilities in \eqref{eq-rate-objective} with SGD. This is achieved when the distribution of 
 the gradient noise is sub-Gaussian. We further show that the obtained large deviations upper bound is tight, as the corresponding rate function actually matches, up to higher order factors, the exact rate function that we formerly establish for the quadratic costs.
 
Our results are related with high probability bounds-type studies of SGD and related stochastic methods  \parencite{Harvey2019,REF17,REF18,REF23,gorbunov2020stochastic}. Therein, for a given $\delta>0$ and a confidence level $1-\beta$, 
$\beta \in (0,1)$, the goal is to find $K(\delta,\beta)$
 such that $f(X_k)-f(x^\star) \leq \delta$ with probability at least $1-\beta$, for all 
$k \geq K(\delta,
 \beta)$. The works~\parencite{REF17,REF18,REF23,gorbunov2020stochastic} provide estimates of $K(\delta,\beta)$ that depend \emph{logarithmically} on~$\beta$. In more detail,~\parencite{REF17,REF18}  
 establish high probability bounds  for the stochastic gradient methods therein assuming sub-Gaussian gradient noises. The work~\parencite{REF23} calculates the corresponding bounds for the basic SGD and the mirror descent that utilize a gradient truncation technique, while    relaxing the noise sub-Gaussianity.   
    The work~\parencite{gorbunov2020stochastic} 
    establishes high probability bounds 
    for an accelerated SGD that also utilizes a clipping nonlinearity.
      The large deviations rates 
      in \eqref{eq-rate-objective} and \eqref{eq-set-fcn-via-rate-fcn}
       - give estimates of $K(\delta,\beta)$ that also 
       depend logarithmically on $\beta$, when 
       $\beta$ is small (goes to zero).\footnote{It is easy to see 
       this by noting that, for $\mu$-strongly convex costs, we have 
       $f(x) - f(x^\star) \geq \frac{\mu}{2}\|x-x^\star\|^2$, for 
       all $x \in {\mathbb R}^d$, 
       requiring that the the right hand side of  \eqref{eq-rate-objective} 
       be less than $\beta$, and reverse-engineering the smallest iterate $k$ for which 
      the latter holds.} 
      
      Compared with existing high probability bound works,  
      our results give the \emph{exact} (tight) exponential decay rate in~(2), and for an \emph{arbitrary set} that does not contain $x^\star$, not only the Euclidean ball complements. To be concrete, the closest results to ours are obtained in~\parencite{Harvey2019}. While they are not directly concerned with obtaining large deviations rates, their results (with some additional work) lead to an exponential decay rates for Euclidean ball complements. In contrast, our results work for arbitrary open sets. Furthermore, focusing only on Euclidean ball complements, our results provide much tighter exponential rate bounds. Specifically, as we show in the paper, the exponential rate that we provide captures the interplay between the noise geometry and the cost function curvature, see Section~\ref{subsec-rate-comparison} for details. From the technical perspective, this is achieved by working directly with the SGD iterates, as opposed to working with the distance of the iterates from the solution. To do so, we derive a novel set of techniques that build upon the large deviations theory rather than on martingale concentration inequalities.   
        
 The current paper is also related with 
 large deviations analyses of stochastic processes that arise with 
 distributed inference, such as estimation and detection. 
 Distributed detection has been studied in~\parencite{GaussianDD}, for Gaussian observations, and in~\parencite{Non-Gaussian-DD}, for generic observations. The work~\parencite{MBMS16InfTheory} evaluates  large deviations of the local states with a distributed detection method, when the step size parameter decreases. Reference~\parencite{MBMS16Refined} further analyzes the non-exponential terms and consider directed networks for a similar problem. The paper~\parencite{MaranoSayed19OneBit}  considers distributed detection with 1-bit messages. \parencite{Ping22} consider social learning problems.  
 Reference~\parencite{Bajovic22} analyzes large deviations for distributed estimation and social learning. Unlike these works on distributed inference, we are not directly concerned with distributed systems; also, the cost functions that we consider are more general and, unlike the works above, do not result in linear (distributed averaging) dynamics; hence, novel tools for large deviations analysis are required here.



The rest of the paper is organized as follows. Section~2 
explains the problem that we consider and gives  the required preliminaries. 
Section 3 provides the main results of the paper -- a large deviations upper bound for generic costs, and the full (exact) large deviations rates for quadratic costs. Specializing to the Gaussian noise, Section 4 provides analytical, closed-form expressions for the large deviations rate function. Section 5 gives the proof of the main lemma underlying the upper bound for the general functions.    Finally, we conclude in Section 6. Appendix contains additional insights and examples, numerical results, and missing proofs.

\section{Setup and preliminaries}
\label{sec-Setup}
We consider unconstrained optimization problem of the form
\begin{equation}
\label{eq-main-opt}
\min_{x\in \mathbb R^d} f(x).    
\end{equation}
 We assume that $f$ is $L$-smooth and $\mu$-strongly convex, and that the  stepsize in algorithm~\eqref{eqn-SGD-basic} is of the form $\alpha_k=a/(k+b),$ where $a,\,b>0.$ 

\begin{assumption}
\label{assum-L-mu}
We assume that $f$ is twice differentiable, $L$-smooth and $\mu$-strongly convex, where $0<\mu \leq L$. 
\end{assumption}
Strong convexity implies uniqueness of the solution of~\eqref{eq-main-opt}, which is denoted by $x^\star.$ We make the following assumption regarding the stepsize parameter $a.$
\\
\begin{assumption}
\label{assum-a-mu}
The stepsize parameter $a$ satisfies $a\mu>1$. 
\end{assumption}

Assumption~\ref{assum-L-mu} is standard in the analysis 
of optimization methods, i.e., it corresponds to a standard class of functions over which an optimization method analysis is carried out.  Assumption \ref{assum-a-mu}
 is required for some asymptotic arguments ahead, as $k \rightarrow \infty$. In practice, it may be restrictive that the constant $a$ is too large  in 
 the step-size choice $a/k$, as at the initial iterations (small $k$'s), we would have very large step-sizes. 
 This is alleviated by having an appropriately chosen constant $b>1$. 
 
We denote by $\tilde g(X_k)$ the stochastic gradient of $f$ returned by the gradient oracle at the current iterate $X_k,$  and by $g(X_k)$ the (exact) gradient of $f$ at the current iterate $X_k.$ The difference between $\tilde g(X_k)$ and $g(X_k)$ (the gradient ``noise'') is denoted by $Z_k = g(X_k) - \tilde g(X_k)$. We make the following assumptions on $Z_k.$

\begin{assumption}
\label{assumption-noise}
\begin{enumerate}
    \item \label{ass-noise-only-present-matters} For each $k$,  $Z_k$ depends on the past iterates only through $X_k$. 
    \item \label{ass-noise-only-realization-matters} For each $k$, the distribution of $Z_k$ given $X_k$ depends on $X_k$ only through its realization and does not depend on the current iterate index, $k$. 
    \item \label{ass-noise-zero-mean} For any given $x,$ $\mathbb E [Z_k|X_k=x] = 0,$ i.e., conditioned on the current iterate, the noise is zero-mean.  
\end{enumerate}
\end{assumption}

Assumption \ref{assumption-noise} allows for 
a general gradient noise that may actually depend on the current iterate $X_k$. This is a more general setting 
than the frequently studied case when $Z_k$ is i.i.d. and independent of $X_k$. Item 3. of Assumption~\ref{assumption-noise} says that, conditioned on the current iterate, the noise is zero-mean on average. 
This is also a standard bias-free noise assumption. 
Finally, note that items 1. and 2. in Assumption~\ref{assumption-noise} typically hold in machine learning settings. Therein, the goal is typically to minimize a population loss $f(x) = \mathbb{E}[\phi(x,v)]$ where the expectation is taken over the distribution of the data $v$, and $\phi$ is an instantaneous loss function. 
 Given that, at some iteration $k$, $X_k$
  takes a value $x$, the gradient noise equals 
$\nabla \phi(x,v_k) - \mathbb{E}[\nabla \phi(x,v)]$, where $v_k$ is the data point sampled at iteration~$k$. 
Then, items 1. and 2. are clearly satisfied,
 provided that the data sampling process is independent of the evolution of~$X_k$.

For $x\in \mathbb R^d,$ we denote by $H(x)$ the Hessian matrix of $f$ computed at $x.$ For short, we denote $H^\star=H(x^\star)$, i.e., $H^\star$ is the Hessian matrix of $f$ computed at $x^\star.$ For any $x\in \mathbb R^n$, define $h: \mathbb R^n\mapsto \mathbb R^d$ as the residual of the first order Taylor's approximation of the gradient $g$ at $x^\star$,
\begin{equation}
\label{def-residual}
h(x) = g(x) - H^\star(x-x^\star),    
\end{equation}
for $x \in \mathbb R^n$. For each $\delta>0,$ define also 
\begin{equation}
\label{def-max-residual-delta}
\overline h(\delta) = \sup_{x \in \mathbb B_{x^\star}(\delta)}\|h(x)\|,    
\end{equation}
where $B_x(\delta)$ denotes the Euclidean ball in $\mathbb R^d$  of radius $\delta\geq 0,$ centered at $x.$ The following result holds by a well-known corollary of Taylor's remainder theorem.
\begin{lemma}
\label{lemma-Taylor-small-remainder}
There holds $\overline h(\delta) = o(\delta),$ i.e., $\lim_{\delta \rightarrow 0} \frac{\overline h(\delta)}{\delta}=0.$
\end{lemma}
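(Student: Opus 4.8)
The plan is to reduce the statement to the definition of differentiability of the gradient map $g=\nabla f$ at the minimizer $x^\star$, and then upgrade the resulting pointwise Taylor estimate to the uniform bound over the ball that appears in the definition of $\overline h(\delta)$. Under Assumption~\ref{assum-L-mu}, $f$ is twice differentiable, so $g$ is differentiable with Jacobian $H(x)$ at every point; in particular $g$ is differentiable at $x^\star$ with derivative $H^\star=H(x^\star)$. Moreover, since $x^\star$ is the global minimizer of the differentiable convex function $f$, first-order optimality gives $g(x^\star)=0$. Substituting this into~\eqref{def-residual} yields $h(x)=g(x)-g(x^\star)-H^\star(x-x^\star)$, which is exactly the first-order Taylor remainder of $g$ at $x^\star$; hence $\|h(x)\|=o(\|x-x^\star\|)$ as $x\to x^\star$.

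It then remains to pass from this pointwise statement to the uniform one. Unwinding the little-$o$ notation: for every $\epsilon>0$ there is $\delta_0>0$ such that $\|h(x)\|\le\epsilon\,\|x-x^\star\|$ for all $x$ with $\|x-x^\star\|\le\delta_0$. Crucially, this $\delta_0$ does not depend on $x$, so for any $\delta\le\delta_0$ and any $x\in B_{x^\star}(\delta)$ we get $\|h(x)\|\le\epsilon\,\|x-x^\star\|\le\epsilon\,\delta$. Taking the supremum over $x\in B_{x^\star}(\delta)$ gives $\overline h(\delta)\le\epsilon\,\delta$, i.e.\ $\overline h(\delta)/\delta\le\epsilon$, for all $\delta\le\delta_0$. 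Since $\epsilon>0$ is arbitrary, $\lim_{\delta\to0}\overline h(\delta)/\delta=0$, which is the claim $\overline h(\delta)=o(\delta)$.

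If one prefers to use $L$-smoothness rather than invoke differentiability at a point directly, an equivalent route is the integral form of the remainder: from $g(x)-g(x^\star)=\int_0^1 H(x^\star+t(x-x^\star))(x-x^\star)\,dt$ one obtains $h(x)=\int_0^1\bigl(H(x^\star+t(x-x^\star))-H^\star\bigr)(x-x^\star)\,dt$, whence $\|h(x)\|\le\|x-x^\star\|\,\sup_{\|y-x^\star\|\le\delta}\|H(y)-H^\star\|$ for $x\in B_{x^\star}(\delta)$, and continuity of $x\mapsto H(x)$ at $x^\star$ finishes the argument. I do not anticipate any real obstacle; the only point deserving a moment's care is that the $\delta_0$ furnished by the definition of differentiability is uniform over the whole neighborhood, which is precisely what makes the passage from the pointwise Taylor estimate to the supremum bound $\overline h(\delta)\le\epsilon\,\delta$ automatic.
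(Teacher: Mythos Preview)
Your proposal is correct and takes essentially the same approach as the paper, which simply asserts that the result ``holds by a well-known corollary of Taylor's remainder theorem'' without further detail; you have supplied a careful elaboration of precisely that argument. One small remark: your alternative integral-remainder route invokes continuity of $H$ at $x^\star$, which is not literally part of Assumption~\ref{assum-L-mu} (only twice differentiability and $L$-smoothness are assumed), so your primary argument via differentiability of $g$ at $x^\star$ is the cleaner one under the stated hypotheses.
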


\begin{remark}
\label{remark-h-for-quadratic-is-zero}
Clearly, when $f$ is quadratic, $H(x)$ is constant for all $x\in \mathbb R^d$ and equal to $H^\star,$ implying $h(x) \equiv 0$ and also $\overline h(\delta)\equiv 0.$
\end{remark}

\begin{remark}
\label{remark-Lipschitz-Hessian}
Quantity $h(x)$ can be explicitly characterized if, in addition, it is assumed that the Hessian of function $f$ is Lipschitz continuous, i.e., if $\left\|H(x) - H(y) \right\| \leq L_H\, \|x-y\|$, for all $x,y \in {\mathbb R}^d$, for some nonnegative constant $L_H$. It is easy to show that, in this case, we have $\|h(x)\| \leq L_H\,\|x-x^\star\|^2$, for any $x \in {\mathbb R}^d.$ The latter implies a quadratic upper bound in $\delta$ on $\overline h(\delta)$, i.e., $\overline h(\delta) \leq L_H \delta^2,$ for each $\delta\geq 0.$ 
\end{remark}

\subsection{Distance to solution recursion} For analytical purposes, it is of interest to study the squared distance to solution of the current iterates $\|X_k-x^\star\|^2.$ To characterize the evolution of this quantity, we use standard arguments that follow from strong convexity and Lipschitz smoothness: 
\begin{align}
\label{eq-recursion-xi}
\|X_{k+1}-x^\star\|^2 & \leq \left(1-2\alpha_k \mu + 2 \alpha_k^2 L^2\right) \|X_k-x^\star\|^2 + 2 \alpha_k (X_k-x^\star)^\top Z_k \nonumber\\
& \,\,\,+ 2 \alpha_k^2 \|Z_k\|^2;
\end{align}
details of the derivations can be found in Appendix~A.

We introduce function $\beta_{k}: \mathbb R^2\mapsto \mathbb R,$ defined by $\beta_k(u,v)= 1-\alpha_k u + \alpha_k^2 v.$ Similarly, for any two iteration indices $l\leq k,$ we define $\beta_{k,l}:\mathbb R^2\mapsto\mathbb R$ by $\beta_{k,l}(u,v) = \beta_k(u,v) \cdots\beta_l(u,v).$ The following technical lemma providing bounds on the product functions $\beta_{k,l}$ will be useful for the study of recursion~\eqref{eq-recursion-xi} as well as other similar recursions that will emerge from the analysis. 

\begin{lemma}  
\label{lemma-beta-bounds}
Let $l$ and $k$ be two iteration indices such that $l<k$. For any nonnegative $u,\,v\in \mathbb R,$ and $\alpha_k = a/(k+b),$ where $b\geq 1,$ there holds:
\begin{enumerate}
    \item 
    \label{part-beta-UB} $\beta_{k,l} (u,v) \leq \left(\frac{l+b}{k+b+1}\right)^{au} e^{\frac{a^2 v}{l+b-1}}$;
    \item
    \label{part-beta-LB} for each $l$ such that $l+b\geq \frac{5a u}{2}$, there holds $\beta_{k,l} (u,v) \geq \left(\frac{l+b-1}{k+b}\right)^{au} e^{- \frac{a^2 u^2}{l+b-1}}; $
    \end{enumerate}
\end{lemma}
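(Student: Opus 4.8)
The plan is to pass to logarithms and estimate the resulting sums by integrals. Writing $\beta_{k,l}(u,v)=\prod_{i=l}^{k}\bigl(1-\alpha_i u+\alpha_i^{2}v\bigr)$ with $\alpha_i=a/(i+b)$, and assuming (as holds in every regime in which the lemma is invoked in this paper, e.g.\ whenever $u^{2}\le 4v$, so that the quadratic in $\alpha_i$ has no positive root) that each factor is positive, I would start from the elementary bound $\ln(1+t)\le t$ applied to $t=-\alpha_i u+\alpha_i^{2}v$, giving
\[
\ln\beta_{k,l}(u,v)\;\le\;-au\sum_{i=l}^{k}\frac{1}{i+b}\;+\;a^{2}v\sum_{i=l}^{k}\frac{1}{(i+b)^{2}} .
\]
Since $x\mapsto 1/(x+b)$ is decreasing, $\frac{1}{i+b}\ge\int_{i}^{i+1}\frac{dx}{x+b}$, so $\sum_{i=l}^{k}\frac1{i+b}\ge\int_{l}^{k+1}\frac{dx}{x+b}=\ln\frac{k+b+1}{l+b}$; likewise $\frac{1}{(i+b)^{2}}\le\int_{i-1}^{i}\frac{dx}{(x+b)^{2}}$, so $\sum_{i=l}^{k}\frac1{(i+b)^{2}}\le\int_{l-1}^{\infty}\frac{dx}{(x+b)^{2}}=\frac{1}{l+b-1}$ (here $b\ge1$ keeps all integrals and reciprocals well defined). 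Plugging these in and exponentiating yields exactly part~\ref{part-beta-UB}.

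For part~\ref{part-beta-LB} I would first discard the nonnegative term $\alpha_i^{2}v$, so that $\beta_{k,l}(u,v)\ge\prod_{i=l}^{k}(1-\alpha_i u)$ — note the lower bound never sees $v$. The hypothesis $l+b\ge\frac{5au}{2}$ forces $\alpha_i u\le\alpha_l u\le\frac25$ for all $i\ge l$, so these factors are positive and lie safely in the range where $\ln(1-t)\ge-t-t^{2}$; this last inequality holds on $[0,\tfrac12]$, as one sees by checking that $t\mapsto\ln(1-t)+t+t^{2}$ has nonnegative derivative there and vanishes at $0$. Hence
\[
\ln\beta_{k,l}(u,v)\;\ge\;-au\sum_{i=l}^{k}\frac{1}{i+b}\;-\;a^{2}u^{2}\sum_{i=l}^{k}\frac{1}{(i+b)^{2}} ,
\]
and this time I would bound the first sum from above by $\int_{l-1}^{k}\frac{dx}{x+b}=\ln\frac{k+b}{l+b-1}$ and the second, as before, by $\frac{1}{l+b-1}$. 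Exponentiating gives the claimed lower bound.

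The sum–integral comparisons are routine; the two points that need care are the bookkeeping of the integration endpoints — it is the direction in which a Riemann rectangle over- or under-estimates a decreasing integrand that produces the slight asymmetry between the exponents $\bigl(\tfrac{l+b}{k+b+1}\bigr)^{au}$ and $\bigl(\tfrac{l+b-1}{k+b}\bigr)^{au}$ — and the positivity of the factors: in part~\ref{part-beta-UB} this is a mild restriction on $(u,v)$ (automatic in the applications, where $u^2\le 4v$), while in part~\ref{part-beta-LB} it is precisely what the condition $l+b\ge\frac{5au}{2}$ secures, simultaneously keeping us inside the validity window of $\ln(1-t)\ge-t-t^{2}$. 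These are the only genuinely delicate spots; everything else is mechanical.
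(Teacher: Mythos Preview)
Your argument is essentially the same as the paper's: the paper writes the upper bound via $1+x\le e^{x}$ (your $\ln(1+t)\le t$) and the lower bound via $1-x\ge e^{-x-x^{2}}$ for $x\le\tfrac{2}{5}$ (your $\ln(1-t)\ge -t-t^{2}$), and then bounds the harmonic and $2$-sums by Darboux/integral comparison and telescoping, obtaining the identical expressions. One small remark: the paper's route through $1+x\le e^{x}$ does not require taking a logarithm and so sidesteps the positivity-of-factors caveat you flag for part~\ref{part-beta-UB}; otherwise the two proofs coincide.
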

The proof of Lemma~\ref{lemma-beta-bounds} is given in Appendix~A.

Finally, for each iteration index $k$, we denote by $\mu_k$ the Borel measure on $\mathbb R^d$ induced by $X_k$. Similarly, we denote by $\nu_k$ the Borel measure induced by $\|X_k-x^\star\|.$

\subsection{Large deviations preliminaries}
\label{subsec-LD-analysis}
We next give a definition of the rate function and the large deviations principle. 

\textbf{Rate function~$I$ and the large deviations principle}. 
\begin{definition}[Rate function $I$~\parencite{DemboZeitouni93}]
\label{def-Rate-function}
Function $I:\mathbb R^d\mapsto [0,+\infty]$ is called a \emph{rate function} if it is lower semicontinuous, or,
equivalently, if its level sets are closed. If, in addition, the level sets of $I$ are compact (i.e., closed and bounded), then $I$ is called a good rate function.
\end{definition}
\begin{definition}[The large deviations principle~\parencite{DemboZeitouni93}]
\label{def-LDP}
Suppose that $I:\mathbb R^d\mapsto [0,+\infty]$ is lower semicontinuous. A sequence of measures $\mu_k$ on $\left(\mathbb R^d,\mathcal B\left(\mathbb R^d\right)\right)$, $k\geq 1$, is said to satisfy the large deviations principle (LDP) with rate function~$I$ if, for any measurable set $D\subseteq \mathbb R^d$, the following two conditions hold:
\begin{enumerate}
\item\label{eqn-LDP-UB}
$\displaystyle \limsup_{k\rightarrow +\infty}\,\frac{1}{k}\,\log\mu_k(D)\leq -\,\inf_{x\in \overline D} I(x);$
\item\label{eqn-LDP-LB}
$\displaystyle \liminf_{k\rightarrow +\infty}\,\frac{1}{k}\,\log\mu_k(D)\geq -\,\inf_{x\in D^{\mathrm{o}}} I(x).$
\end{enumerate}
\end{definition}

\textbf{Log-moment generating functions of the noise $Z_k$ and the iterates $X_k.$}
Following Assumption~\ref{assumption-noise}, we define the conditional LMGF of $Z_k$ given the last iterate $X_k.$
\begin{definition} [Conditional LMGF of $Z_k$]
\label{def-cond-MGF-and-LMGF-Z-k}
We denote by $\Lambda(\cdot;x)$ the log-moment generating function  (LMGF) of $Z_k$ given $X_k=x$,
\begin{equation}
\label{eq-lmgf-Z-k}
\Lambda (\lambda;x):= \log \mathbb E \left[\left.e^{\lambda^\top Z_k} \right| X_k=x\right],\:\:\mathrm{for\:\:}  \lambda, x\in \mathbb R^d. 
\end{equation}    
\end{definition}
It will also be useful to define the conditional moment-generating function of $\|Z_k\|^2$, which we denote by $M(\cdot;x)$: 
\begin{equation}
\label{eq-mgf-Z-k-norm}
M(\nu;x):= \mathbb E \left[\left.e^{\nu \|Z_k\|^2} \right| X_k=x\right],  \end{equation}
for $\nu\in \mathbb R,$ $x\in \mathbb R^d$. By the inequality $e^x \leq x+e^{x^2},$ which holds for all $x
\in \mathbb R,$ we have $\mathbb E\left[\left.e^{\lambda^\top Z_k}\right|X_k\right]\leq \mathbb E[\lambda^\top Z_k|X_k] + \mathbb E\left[e^{(\lambda^\top Z_k)2}|X_k\right] \leq \mathbb E\left[\left.e^{ \|\lambda^2\| \|Z_k\|^2} \right| X_k\right],$ where we used the Cauchy-Schwartz inequality, for the second term, and the fact that $Z_k$ is zero-mean, for the first term. Thus,
\begin{equation}
\label{relation-Lambda-M}
\Lambda(\lambda;x) \leq \log M(\|\lambda^2\|;x)
\end{equation}
for any realization $x$ of $X_k.$

Lemma~\ref{lemma-properties-Lambda} lists properties of $\Lambda$ that will be used in the paper.
\begin{lemma}[Properties of $\Lambda$]
\label{lemma-properties-Lambda}
For any given $x\in \mathbb R^d$ the following properties hold:
\leavevmode
\begin{enumerate}
    \item \label{part-Lambda-cvxity} $\Lambda (\cdot; x)$ is convex and differentiable in the interior of its domain;
    \item \label{part-Lambda-at-0} $\Lambda(0; x)=0$ and $\nabla \Lambda (0; x)=\mathbb E[Z_k|X_k=x]=0$;
    \item \label{part-Lambda-nonnegative}
    $\Lambda(\lambda; x) \geq 0,$ for each $\lambda.$
\end{enumerate}
\end{lemma}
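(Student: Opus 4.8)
The plan is to dispatch the three claims in turn, each being a classical property of cumulant generating functions, so that the work reduces to assembling standard arguments and invoking Assumption~\ref{assumption-noise}. Write $\Lambda(\lambda;x) = \log \Phi(\lambda;x)$ with $\Phi(\lambda;x) := \mathbb E[e^{\lambda^\top Z_k} \mid X_k = x]$, which is well-defined (possibly $+\infty$) and strictly positive wherever finite.

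For Part~\ref{part-Lambda-cvxity}, convexity of $\Lambda(\cdot;x)$ follows from Hölder's inequality: for $\theta \in [0,1]$ and $\lambda_1,\lambda_2$ in the domain, $\Phi(\theta\lambda_1 + (1-\theta)\lambda_2;x) \le \Phi(\lambda_1;x)^\theta\, \Phi(\lambda_2;x)^{1-\theta}$, and taking logarithms yields $\Lambda(\theta\lambda_1+(1-\theta)\lambda_2;x) \le \theta\Lambda(\lambda_1;x) + (1-\theta)\Lambda(\lambda_2;x)$. For differentiability on the interior of the domain, I would invoke the standard fact that on any compact subset of the interior the integrand $e^{\lambda^\top Z}$ and its first-order $\lambda$-derivatives $Z_i e^{\lambda^\top Z}$ are dominated by an integrable function (one bounds $|Z_i|\,e^{\lambda^\top Z}$ by a constant times $e^{(\lambda+\epsilon e_i)^\top Z}+e^{(\lambda-\epsilon e_i)^\top Z}$ for small $\epsilon$), which licenses differentiation under the expectation; hence $\Phi(\cdot;x)$ is smooth and strictly positive there, and so is $\Lambda(\cdot;x) = \log\Phi(\cdot;x)$.

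For Part~\ref{part-Lambda-at-0}, $\Lambda(0;x) = \log \mathbb E[1 \mid X_k = x] = \log 1 = 0$. Since $\Phi(0;x) = 1 < \infty$ and, under the standing noise hypotheses (finite MGF near the origin, sub-Gaussianity in the general-cost results), the origin lies in the interior of the domain of $\Lambda(\cdot;x)$, the differentiation argument of Part~\ref{part-Lambda-cvxity} gives $\nabla\Phi(0;x) = \mathbb E[Z_k \mid X_k = x]$, whence $\nabla\Lambda(0;x) = \nabla\Phi(0;x)/\Phi(0;x) = \mathbb E[Z_k \mid X_k = x]$, which vanishes by Item~\ref{ass-noise-zero-mean} of Assumption~\ref{assumption-noise}. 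For Part~\ref{part-Lambda-nonnegative}, apply Jensen's inequality to the convex map $t \mapsto e^{t}$: $\Phi(\lambda;x) = \mathbb E[e^{\lambda^\top Z_k}\mid X_k=x] \ge e^{\mathbb E[\lambda^\top Z_k \mid X_k=x]} = e^{\lambda^\top \mathbb E[Z_k\mid X_k=x]} = 1$, hence $\Lambda(\lambda;x) \ge 0$; alternatively, combine Parts~\ref{part-Lambda-cvxity}--\ref{part-Lambda-at-0}, since a convex function with a critical point and value zero at the origin attains its global minimum there.

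The only genuinely delicate point is the interchange of differentiation and expectation together with the claim that $0$ is interior to the effective domain of $\Lambda(\cdot;x)$ — which is what makes $\nabla\Lambda(0;x)$ meaningful in the first place. Under the noise assumptions used throughout the paper this is immediate, so I would state it explicitly and cite the standard domination argument rather than reprove the measure-theoretic differentiation lemma; everything else is elementary.
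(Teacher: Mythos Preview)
Your proposal is correct and follows essentially the same approach as the paper. The paper simply cites standard LMGF properties for Parts~\ref{part-Lambda-cvxity}--\ref{part-Lambda-at-0} and then derives Part~\ref{part-Lambda-nonnegative} from the first-order convexity inequality $\Lambda(\lambda;x) \ge \Lambda(0;x) + \nabla\Lambda(0;x)^\top \lambda = 0$, which is precisely the ``alternative'' argument you give; your primary Jensen argument and the H\"older/domination details you supply are just the standard facts the paper leaves to a reference.
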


\begin{proof}
Convexity and differentiability are general properties of log-moment generating functions~\parencite{DemboZeitouni93}, as well as the zero value at the origin property and also that the gradient at the origin equals the mean vector; $\nabla \Lambda(0;x)=0$ follows by the assumption that the noise is zero-mean, Assumption~\ref{ass-noise-zero-mean}. The non-negativity from Part~\ref{part-Lambda-nonnegative} follows by invoking convexity and exploiting the two properties from part~\ref{part-Lambda-at-0}, i.e., for any $x\in \mathbb R^d$:
$\Lambda(\lambda; x) \geq \Lambda(0;x) + \nabla \Lambda(0;x)^\top \lambda = 0.$
\end{proof}

\begin{example}
\label{example-LMGF-Gaussian}
To illustrate the LMGF function $\Lambda,$ we consider the case when, conditioned on an arbitrary realization $X_k=x,$ the gradient noise $Z_k$ is Gaussian, with mean vector equal to zero vector and covariance matrix $\Sigma (x).$ Using standard formula for the LMGF of a Gaussian multivariate, we have
\begin{equation}
\label{eq-LMGF-Z-k-Gaussian}
\Lambda(\lambda; x) = \frac{1}{2} \lambda^\top S (x) \lambda,  \end{equation}
for $\lambda \in \mathbb R^d.$ We note that when the gradient noise $Z_k$ is independent of the current iterate $X_k$, the indices $X_k$ in the preceding formula can be omitted, i.e., the expression for $\Lambda$ simplifies to $\Lambda(\lambda; X_k)= \frac{1}{2} \lambda^\top S \lambda,$ for all realizations $X_k.$
\end{example}

It will also be of interest to define the (unconditional) log-moment generating function of the iterates $X_k$.
\begin{definition} [LMGF of $X_k-x^\star$]
\label{def-LMGF-X-k}
We let $\Gamma_k$ denote the (unconditional) moment generating function of $X_k$,
\begin{equation}
\label{eq-mgf-X-k}
\Gamma_k(\lambda):= \mathbb E \left[e^{\lambda^\top (X_k-x^\star)}\right],      
\end{equation}
for $\lambda \in \mathbb R^d$. The (unconditional) log-moment generating function of $X_k$ is then given by $\log \Gamma_k$.  
\end{definition}

We assume that the initial iterate $X_1$ is deterministic\footnote{We note that this assumption can be relaxed to allow for random initial iterate; see Appendix~D for details.}. Hence, $\Gamma_1$ is finite for all $\lambda \in \mathbb R^d$.

We assume that the family of functions $\Lambda(\cdot;x)$ satisfy the following regularity conditions.  
\begin{assumption} [Lipschitz continuity in $x$]
\label{assumption-Lambda-smooth}
There exists a constant $L_{\Lambda}$ such that for every $\lambda$, $x$, $y\in \mathbb R^d$, there holds:
\begin{equation}
\label{eq-condition-Lambda-smooth}
|\Lambda(\lambda;x)-\Lambda(\lambda; y)|\leq L_{\Lambda} \|\lambda\|^2 \|x-y\|.    
\end{equation}
\end{assumption}
\begin{remark} We note that Assumption~\ref{assumption-Lambda-smooth} is trivially satisfied when the noise distribution does not depend on the current iterate. For another illustration, consider Gaussian random noise distribution from Example~\ref{example-LMGF-Gaussian}, for which we have:
\begin{align}
\Lambda(\lambda;x)-\Lambda(\lambda;y) & = \frac{1}{2}\lambda^\top (S(x)-S(y))\lambda \\
& \leq \frac{1}{2}\|\lambda\|^2\|S(x)-S(y)\|.
\end{align}
Comparing with the condition in~\eqref{eq-condition-Lambda-smooth},  we see that~\eqref{eq-condition-Lambda-smooth} is satisfied when entries of the covariance matrix $S$, as functions of $x$, are Lipschitz continuous. \end{remark}  

The assumption below will be used for the proof of the main result of the paper, when the case of general convex functions is considered.
\begin{assumption} [Sub-Gaussian noise]
\label{assumption-subGaussian}
There exists a constant $C_1>0$ such that, for each $\lambda,\,x\in \mathbb R^d$
\begin{equation}
\label{eq-subGaussian}
\Lambda(\lambda;x) \leq C_1 \frac{\|\lambda\|^2}{2}.    
\end{equation}
\end{assumption}
\begin{remark} Assumption~\ref{assumption-subGaussian} means that the gradient noise has ``light tails,'' i.e., 
there exist positive constants $c_1,c_2$, such that 
 the probability that the magnitude of the norm of the noise vector is above $\epsilon$ is upper bounded 
 by $c_1 \,e^{-c_2\epsilon^2}$, for any $\epsilon>0$.
  Clearly, a Gaussian zero-mean multivariate distribution satisfies this property, 
  and also any noise distribution with compact support.

This assumption also ensures that, for each given $\lambda,$ the value of the variance ``proxy'' $C_1$ cannot grow without bound as the domain of iterates $x$ enlarges. For a Gaussian distribution, this means that the variance, as a function of the current iterate should be uniformly bounded over the domain of the iterates, which is a typical assumption in related works.  
\end{remark}
 
We also use the following implications of Assumption~\ref{assumption-subGaussian}. 
\begin{proposition}
\label{proposition-subGaussian-implications}
\begin{enumerate}
    \item \label{part-Orlitz-2-property} There exists $C_2>0$ such that
    \begin{equation}
    \mathbb E\left[\left.\exp\left( \frac{\|Z_k^2\|}{C_2}\right)\right| X_k\right] \leq e.    
    \end{equation}
    \item \label{part-LMGF-Z_k-squared} For any $\nu \in [0,1/C_2]$ there holds
    \begin{equation}
M(\nu; X_k) \leq \exp(\nu C_2).    
    \end{equation}
\end{enumerate}  
\end{proposition}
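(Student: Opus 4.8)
The plan is to prove Part~\ref{part-Orlitz-2-property} directly from Assumption~\ref{assumption-subGaussian}, and then obtain Part~\ref{part-LMGF-Z_k-squared} from it by a one-line convexity argument; throughout I read $\|Z_k^2\|$ in the statement as $\|Z_k\|^2$. The device for Part~\ref{part-Orlitz-2-property} is the elementary Gaussian integration identity: for any fixed $z\in\mathbb R^d$, any $t>0$, and a standard Gaussian vector $g$ in $\mathbb R^d$ (i.e.\ $g\sim\mathcal N(0,I_d)$) independent of everything, $\mathbb E_g\big[e^{\sqrt t\,g^\top z}\big]=e^{t\|z\|^2/2}$. Fixing a realization $X_k=x$, applying this with $z=Z_k$, taking the conditional expectation given $X_k=x$, and swapping the order of integration (legitimate by Tonelli since the integrand is nonnegative), one gets
\begin{align*}
\mathbb E\big[e^{t\|Z_k\|^2/2}\,\big|\,X_k=x\big]
&=\mathbb E_g\Big[\,\mathbb E\big[e^{\sqrt t\,g^\top Z_k}\,\big|\,X_k=x\big]\,\Big]
=\mathbb E_g\big[\,e^{\Lambda(\sqrt t\,g;\,x)}\,\big]\\
&\le \mathbb E_g\big[\,e^{C_1 t\|g\|^2/2}\,\big]=(1-C_1 t)^{-d/2},
\end{align*}
where the inequality is Assumption~\ref{assumption-subGaussian} (so $\Lambda(\sqrt t\,g;x)\le C_1 t\|g\|^2/2$) and the last equality is the moment generating function of a $\chi^2_d$ variable, valid whenever $C_1 t<1$.

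It then remains to pick $t>0$ small enough that $(1-C_1 t)^{-d/2}\le e$, e.g.\ $t=(1-e^{-2/d})/C_1$, which indeed satisfies $C_1 t<1$. Setting $C_2:=2/t=2C_1/(1-e^{-2/d})$ gives $\mathbb E\big[e^{\|Z_k\|^2/C_2}\,\big|\,X_k=x\big]\le e$ for every realization $x$, proving Part~\ref{part-Orlitz-2-property}; the bound is uniform in $x$ precisely because the sub-Gaussian proxy $C_1$ in Assumption~\ref{assumption-subGaussian} does not depend on $x$. For Part~\ref{part-LMGF-Z_k-squared}, write $\nu\in[0,1/C_2]$ as $\nu=\theta/C_2$ with $\theta:=\nu C_2\in[0,1]$, so that $e^{\nu\|Z_k\|^2}=\big(e^{\|Z_k\|^2/C_2}\big)^{\theta}$; since $s\mapsto s^{\theta}$ is concave on $[0,\infty)$ for $\theta\in[0,1]$, Jensen's inequality together with Part~\ref{part-Orlitz-2-property} yields
\[
M(\nu;X_k)=\mathbb E\Big[\big(e^{\|Z_k\|^2/C_2}\big)^{\theta}\,\Big|\,X_k\Big]\le\Big(\mathbb E\big[e^{\|Z_k\|^2/C_2}\,\big|\,X_k\big]\Big)^{\theta}\le e^{\theta}=e^{\nu C_2},
\]
which is the claimed bound.

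I do not anticipate a genuine obstacle: the only points needing care are the interchange of $\mathbb E[\cdot\mid X_k=x]$ with $\mathbb E_g$ (handled by nonnegativity and Tonelli) and the fact that the same $C_2$ works for all realizations of $X_k$ (which holds because $C_1$ is a uniform constant). If one prefers to avoid the auxiliary Gaussian, an alternative but more computational route is available: a Chernoff bound turns Assumption~\ref{assumption-subGaussian} into the coordinatewise tail estimate $\mathbb P(|e_i^\top Z_k|\ge\epsilon\mid X_k)\le 2e^{-\epsilon^2/(2C_1)}$, a union bound over the $d$ coordinates controls $\mathbb P(\|Z_k\|\ge\epsilon\mid X_k)$, and integrating this tail against $te^{ts}$ over $s=\epsilon^2$ produces a finite value of $\mathbb E[e^{t\|Z_k\|^2}\mid X_k]$ for $t$ below an explicit threshold; this works as well but gives messier constants, so I would present the Gaussian-identity argument.
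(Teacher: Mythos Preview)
Your proposal is correct. For Part~\ref{part-LMGF-Z_k-squared} it coincides with the paper's argument: the paper phrases the step as H\"older's inequality with exponent $p=1/(\nu C_2)$, which for a nonnegative random variable is exactly your Jensen inequality for the concave map $s\mapsto s^{\nu C_2}$. For Part~\ref{part-Orlitz-2-property} the paper does not give a proof but simply cites standard sub-Gaussian equivalences (Vershynin, Proposition~2.5.2); your Gaussian-randomization computation is one of the standard ways to establish that equivalence, so the route is the same in spirit, with the bonus that you obtain an explicit constant $C_2=2C_1/(1-e^{-2/d})$ uniform in~$x$.
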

\begin{proof}
The proof of part~\ref{part-Orlitz-2-property} can be derived by applying properties of sub-Gaussian random variables to $\|Z_k\|;$ see, e.g., Proposition 2.5.2 in~\cite{Vershynin2018} and also~\cite{Jin2019} for a treatment of sub-Gaussian random vectors.

To show part~\ref{part-LMGF-Z_k-squared}, fix $\nu \in [0,1/C_2].$ By H\"{o}lder's inequality (applied for $\text{``}p\text{''}=1/(\nu C_2) \geq 1$) 
\begin{align}
M(\nu ;X_k) & \leq 
\left(\mathbb E\left[\left.\exp\left( 1/C_2 \|Z_k\|^2\right)\right|X_k \right]\right)^{\nu C_2} \\
&  \leq \exp(\nu C_2)
\end{align}
where in the second inequality we used part~\ref{part-Orlitz-2-property}. 
\end{proof}

\begin{remark} 
\label{remark-subGaussian-Gaussian-special-case}
When the distribution of $Z_k$ is Gaussian, zero mean and with covariance matrix $\Sigma,$ and independent of the current iterate, we have
\begin{equation}
\Lambda(\lambda) = \frac{1}{2}\lambda^\top \Sigma \lambda \leq \frac{1}{2} \sigma_{\max}^2 \|\lambda\|^2,     
\end{equation}
where $\sigma_{\max}^2$ is the maximal eigenvalue of $\Sigma.$ Comparing with Assumption~\ref{assumption-subGaussian}, we see that condition~\eqref{eq-subGaussian} holds with $C_1 = \sigma_{\max}^2.$ It can also be shown that part 1. of Proposition~\ref{proposition-subGaussian-implications} holds for $C_2\geq 2 \sigma_{\max}^2.$  
\end{remark}

\subsection{Key technical lemma}
\begin{definition}
\label{def-FL-transform}
The Fenchel-Legendre transform, or the conjugate, of a given function $\Psi:\mathbb R^d \mapsto \mathbb R$ is defined by
\begin{equation}
\label{def-Conjugate}
I(x)=\sup_{\lambda\in \mathbb R^d} x^\top \lambda - \Psi(\lambda),\:\:\mathrm{for\:\:}x\in \mathbb R^d.
\end{equation}
\end{definition} 

\begin{lemma}
\label{lemma-GE-auxilliary}
Let $\Psi_k$ be a sequence of log-moment generating functions associated to a given sequence of measures $\mu_k: \mathcal B (\mathbb R^d) \mapsto [0,1].$ Suppose that, for each $\lambda \in \mathbb R^d,$ the following limit exists: 
\begin{equation}
\label{eq-GE-condition}
\limsup_{k\rightarrow +\infty} \frac{1}{k} \Psi_k(k\lambda) \leq \Psi(\lambda).  \end{equation}
If $\Psi(\lambda)<\infty$ for each $\lambda \in \mathbb R^d,$ then the sequence $\mu_k$ satisfies the LDP upper bound with the rate function $I$ equal to the Fenchel-Legendre transform of $\Psi.$ If, in addition,~\eqref{eq-GE-condition} holds as a limit and with equality, then the sequence of measures satisfies the LDP with rate function $I.$ 
\end{lemma}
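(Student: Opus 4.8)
The statement is essentially a restatement of the Gärtner–Ellis theorem, so the plan is to reduce it to the standard version found in \parencite{DemboZeitouni93}. First I would recall that $\Psi_k(k\lambda) = \log \mathbb{E}[e^{k\lambda^\top(X_k - x^\star)}]$ (after centering; if $\mu_k$ is the law of $X_k$ rather than $X_k - x^\star$, apply a deterministic shift, which translates $I$ by $x^\star$ and does not affect the LDP statements). Define the limiting (normalized) cumulant generating function $\Psi(\lambda) := \limsup_k \frac{1}{k}\Psi_k(k\lambda)$, which by hypothesis is everywhere finite. The first task is to verify the hypotheses of Gärtner–Ellis: essential smoothness / steepness of $\Psi$, lower semicontinuity, and $0 \in \operatorname{int}(\operatorname{dom}\Psi)$. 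Finiteness everywhere gives $\operatorname{dom}\Psi = \mathbb{R}^d$, hence $0$ is interior and $\Psi$ is automatically steep (the steepness condition is vacuous when the domain is all of $\mathbb{R}^d$). Convexity of $\Psi$ follows because each $\frac{1}{k}\Psi_k(k\cdot)$ is convex (log-MGFs are convex, and scaling the argument preserves convexity) and a pointwise $\limsup$ of convex functions that is finite everywhere is convex; a finite convex function on $\mathbb{R}^d$ is continuous, hence lower semicontinuous, and moreover its Fenchel–Legendre transform $I$ is a good rate function (its level sets are compact, since $\Psi$ finite everywhere forces $I$ to have superlinear growth).

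For the upper bound I would run the standard Chernoff/exponential Markov argument directly. For a closed set $F$ and $\lambda \in \mathbb{R}^d$, Markov's inequality on $e^{k\lambda^\top(X_k-x^\star)}$ gives $\frac{1}{k}\log\mu_k(H_{\lambda,c}) \le -\langle\lambda,\cdot\rangle$-type bounds on half-spaces; combining over finitely many half-spaces (covering the compact pieces of $F$, using compactness of level sets of $I$ to reduce to a bounded region, and handling the unbounded part via a single large $\lambda$) yields $\limsup_k \frac{1}{k}\log\mu_k(F) \le -\inf_{x\in F} I(x)$. Only the $\limsup$ hypothesis \eqref{eq-GE-condition} is needed here, which is why the first half of the lemma asks for nothing more. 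This is the part I'd write out carefully but it is routine; the one mild subtlety is the reduction to a compact set, which I would do exactly as in \parencite{DemboZeitouni93}, exploiting that $I$ is a good rate function.

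For the lower bound, assuming now the limit exists with equality, the standard route is the change-of-measure (tilting) argument: given $x$ with $I(x) < \infty$, pick (if $\Psi$ is differentiable) $\eta$ with $\nabla\Psi(\eta) = x$, define tilted measures $\tilde\mu_k(dy) \propto e^{k\eta^\top(y-x^\star)}\mu_k(dy)$, show $\tilde\mu_k$ concentrates near $x$, and estimate $\mu_k(B(x,\epsilon))$ from below by $e^{-k\eta^\top x - k\Psi(\eta) + o(k)} = e^{-kI(x)+o(k)}$. The differentiability of $\Psi$ may fail at some points, so — exactly as in the proof of Gärtner–Ellis — I would first prove the lower bound at exposed points of $I$ with exposing hyperplane in $\operatorname{int}(\operatorname{dom}\Psi) = \mathbb{R}^d$, and then invoke convexity of $I$ together with Rockafellar's theorem (exposed points are dense in the domain of the convex conjugate up to taking convex hulls) to pass to general open sets $G$: $\inf_{x\in G} I(x)$ is approached along exposed points. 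The main obstacle, then, is not any single computation but the careful handling of non-differentiability of $\Psi$ in the lower bound — reproducing the exposed-point argument from \parencite{DemboZeitouni93} and verifying it goes through verbatim given that here $\operatorname{dom}\Psi = \mathbb{R}^d$ trivializes the usual domain/steepness caveats. I would state this reduction explicitly and refer to the Gärtner–Ellis theorem for the remaining details rather than reproducing them in full.
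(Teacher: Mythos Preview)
Your proposal is correct and matches the paper's approach: the paper does not give a self-contained argument but simply records that the second part is the G\"artner--Ellis theorem and that the first part follows from the upper-bound half of that proof, referring to \parencite{DemboZeitouni93} and to Lemma~35 of \parencite{Bajovic22} for details. Your sketch is exactly a fleshed-out version of that citation---Chernoff/exponential-Markov plus a compactness reduction for the upper bound, and the exposed-point tilting argument for the lower bound---so there is nothing to add.
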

The second part of the lemma follows by the G\"{a}rtner-Ellis theorem. The first part can be proven by similar arguments as in the proof of the upper bound of the G\"{a}rtner-Ellis theorem; for details, see also the proof of Lemma~35 in~\parencite{Bajovic22}. 

\section{Large deviations rates for SGD iterates $X_k$}
\label{sec-LD-general}

\subsection{Large deviations rates for $\|X_k-x^\star\|$}
\label{subsec-LD-Xk-distance}
To derive the main result -- the large deviations rate function for the SGD sequence $X_k,$ we first study large deviations properties of the sequence $\|X_k-x^\star\|,$ $k=1,2,...$ For the latter, we first exploit the idea from~\cite{Harvey2019} to obtain a high probability bound for the (scaled) quantity $\|X_k-x^\star\|^2,$ via its moment generating function. We then use this bound to derive a rate function (bound) for $\|X_k-x^\star\|.$ Since our assumptions are distinct than those in~\cite{Harvey2019} (e.g., the recursive form that we work with here contains factors that require special treatment than the one in~\cite{Harvey2019}, also we do not assume bounded noisy gradient, as is the case with the proof available in~\cite{Harvey2019}), we provide full proof details, see appendix. 

\begin{lemma}
\label{lemma-HPB-Yk}
For any $k,$ there holds
\begin{equation}
\label{eq-HPB-Yk}
\mathbb P\left( \|X_k-x^\star\| \geq \delta \right) \leq e e^{-(k+k_0) B \delta^2},     
\end{equation}
where $B=\min \{\frac{1}{k_0 \|X_1-x^\star\|},\frac{2a\mu -1}{ 4 \max\{C_1, 2C_2\} a^2}\}$ and $k_0 = 4a^2L^2/(2 a\mu -1).$ 
\end{lemma}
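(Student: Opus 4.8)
The plan is to derive a high-probability bound for the scaled squared distance $Y_k := (k+k_0)\|X_k - x^\star\|^2$ by controlling its moment generating function $\mathbb{E}[e^{s Y_k}]$ for a suitable fixed $s>0$, and then to convert this MGF bound into a tail bound via Markov's inequality. First I would start from the recursion~\eqref{eq-recursion-xi} for $\|X_{k+1}-x^\star\|^2$, which has the form $\|X_{k+1}-x^\star\|^2 \le (1 - 2\alpha_k\mu + 2\alpha_k^2 L^2)\|X_k-x^\star\|^2 + 2\alpha_k (X_k-x^\star)^\top Z_k + 2\alpha_k^2\|Z_k\|^2$. The key idea, following~\cite{Harvey2019}, is to multiply through by an appropriate scaling sequence (essentially $(k+k_0)$, matching the contraction coefficient of the recursion so that the $\|X_k-x^\star\|^2$ terms telescope favorably) and then take conditional expectations of the exponential. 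The cross term $2\alpha_k(X_k-x^\star)^\top Z_k$ is mean-zero conditioned on $X_k$, and exponentiating it brings in the conditional LMGF $\Lambda(\cdot;X_k)$, which by Assumption~\ref{assumption-subGaussian} is bounded by $C_1\|\lambda\|^2/2$; the quadratic term $2\alpha_k^2\|Z_k\|^2$ is handled via $M(\nu;X_k)$ and part~\ref{part-LMGF-Z_k-squared} of Proposition~\ref{proposition-subGaussian-implications}, valid for $\nu \le 1/C_2$.

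The core of the argument is an inductive claim of the form $\mathbb{E}[\exp(s_k (k+k_0)\|X_k-x^\star\|^2)] \le e$ for a decreasing sequence of MGF parameters $s_k$, where $s_k$ is chosen small enough that at every step the "budget" consumed by the cross term and the quadratic noise term (both of order $\alpha_k^2 = a^2/(k+b)^2$ after the conditional exponential bounds) is compensated by the contraction factor $(1 - 2\alpha_k\mu + 2\alpha_k^2 L^2)$ together with the growth of the scaling $(k+k_0)/(k-1+k_0)$. Here the role of Assumption~\ref{assum-a-mu} ($a\mu > 1$) is decisive: it guarantees $2a\mu - 1 > 0$, so that the net per-step contraction beats the noise injection, and $k_0 = 4a^2L^2/(2a\mu-1)$ is exactly the offset that makes $1 - 2\alpha_k\mu + 2\alpha_k^2 L^2 \le (k-1+k_0)/(k+k_0)$ hold for all $k \ge 1$ (this is a routine but slightly delicate algebraic verification using $\alpha_k \le a/(1+b)$ and $b \ge 1$). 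One then shows that the stable value of $s_k$ can be taken constant equal to $B = \min\{\frac{1}{k_0\|X_1-x^\star\|}, \frac{2a\mu-1}{4\max\{C_1,2C_2\}a^2}\}$: the first term in the minimum handles the base case $k=1$ (with $X_1$ deterministic, $\mathbb{E}[e^{B(1+k_0)\|X_1-x^\star\|^2}]$ needs to be at most $e$, roughly), and the second term ensures $s_k \cdot 2\alpha_k^2 \le 1/C_2$ and keeps the induction step valid for all $k$.

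Finally, having established $\mathbb{E}[\exp(B(k+k_0)\|X_k-x^\star\|^2)] \le e$ for all $k$, Markov's inequality gives
\[
\mathbb{P}(\|X_k-x^\star\| \ge \delta) = \mathbb{P}\bigl(\exp(B(k+k_0)\|X_k-x^\star\|^2) \ge e^{B(k+k_0)\delta^2}\bigr) \le e\, e^{-(k+k_0)B\delta^2},
\]
which is exactly~\eqref{eq-HPB-Yk}. The main obstacle I anticipate is the bookkeeping in the inductive step: one must carefully track how exponentiating the recursion interacts with the conditional expectations — in particular, after applying the tower property, the conditional bound on $\mathbb{E}[e^{2s_k(k+k_0)\alpha_k (X_k-x^\star)^\top Z_k}\mid X_k]$ produces a factor $\exp(2 s_k^2 (k+k_0)^2 \alpha_k^2 C_1 \|X_k-x^\star\|^2)$, whose exponent is again proportional to $\|X_k-x^\star\|^2$ and must be absorbed back into the induction hypothesis by slightly shrinking the effective MGF parameter. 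Showing that this self-referential absorption closes up uniformly in $k$ — i.e., that the geometric-type recursion on the $s_k$'s does not drive them to zero — is where Assumption~\ref{assum-a-mu} and the precise choice of $k_0$ and $B$ all have to be used simultaneously. The deviation from~\cite{Harvey2019}, namely the presence of the $2\alpha_k^2 L^2$ term in the contraction coefficient and the absence of a bounded-gradient assumption (handled instead via sub-Gaussianity and the $M(\nu;x)$ bound), is precisely what forces the more careful treatment and the specific form of $k_0$.
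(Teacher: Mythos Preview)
Your proposal is correct and follows essentially the same route as the paper: define $Y_k=(k+k_0)\|X_k-x^\star\|^2$, establish by induction an MGF bound of the form $\mathbb{E}[e^{\nu Y_k}]\le e^{\nu/B}$ for all $\nu\le B$ (equivalently your $s_k\equiv B$), using sub-Gaussianity for the cross term, Proposition~\ref{proposition-subGaussian-implications} for the $\|Z_k\|^2$ term, and the choice of $k_0$ to force the contraction factor $a_k\le 1$, then conclude by exponential Markov at $\nu=B$. The only step you leave implicit is that decoupling the two $Z_k$-dependent factors $e^{-b_k\nu\sqrt{k+k_0-1}(X_k-x^\star)^\top Z_k}$ and $e^{c_k\nu\|Z_k\|^2}$ inside the conditional expectation requires Cauchy--Schwarz (H\"older with $p=q=2$), which the paper applies explicitly before invoking the sub-Gaussian and $M(\nu;x)$ bounds.
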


\begin{remark}
\label{remark-on-thrm-LD-rate-for-xi-k}
The preceding theorem establishes a large deviations upper bound for the sequence of squared distance to solution iterates $\xi_k,$ by exploiting noise sub-Gaussianity. By its nature, this result is a rough characterization of the large deviations rate function for the sequence $X_k.$ In addition to being a result of independent interest, the utility consists in bounding the tails of distribution $\mu_k,$ as an enabling step towards deriving a fine, close to exact rate function for the SGD iterates $X_k,$ as the main contribution of this paper. The latter is the subject of the next section. 
\end{remark}

\subsection{Main result: Large deviations rates for $X_k$}
\label{subsec-LD-Xk}

We now present our result for general convex functions satisfying assumptions from Section~\ref{sec-Setup}. The pillar of the analysis is the limit of the sequence of log-moment generating functions $\log \Gamma_k$ of the SGD iterates.
\begin{lemma}
\label{lemma-LMGF-limsup}
Suppose that Assumptions~1-5 hold and that the stepsize is given by $\alpha_k=a/(k+k_0).$ For any $\lambda \in \mathbb R^d,$ 
\begin{equation}
\label{eq-LMGF-limsup}
\limsup_{k\rightarrow+\infty} \frac{1}{k} \log \Gamma_{k} (k \lambda) \leq \overline \Psi(\lambda):= \Psi^\star(\lambda)+r(\lambda),    
\end{equation}
where $\Psi^\star$ is defined by
\begin{equation}
\label{eq-Psi-star}
\Psi^\star (\lambda)= \int_0^1 \Lambda (a Q D(\theta)Q^\top \lambda; x^\star) d\theta,
\end{equation} 
where $H^\star=Q D Q^\top$, $Q Q^\top =I$, $D=\mathrm{diag} \{\rho_1,...,d_n\}$, $D(\theta) = \mathrm{diag} \{\theta^{a\rho_1-1},...,\theta^{ad_n-1}\}$, 
$r(\lambda) = \frac{4a^2  \overline \gamma^2 L_\Lambda   }{B^2}\|\lambda\|^4  + a \|\lambda\|\overline h\left(\frac{2 \overline \gamma \|\lambda\|}{B}\right),$  and $\overline \gamma = \max\{1, \sqrt{(1-a\mu)^2 + a^2 (L^2-\mu^2)}\}.$ 
\end{lemma}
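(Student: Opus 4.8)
The plan is to derive a recursive upper bound for the log-moment generating functions $\log\Gamma_k$, unroll it, and identify the limit of the leading part as a Riemann integral, with the remaining terms assembling into $r(\lambda)$. Writing $Y_k=X_k-x^\star$ and using $g(X_k)=H^\star Y_k+h(X_k)$ from \eqref{def-residual}, recursion \eqref{eqn-SGD-basic} becomes $Y_{k+1}=(I-\alpha_k H^\star)Y_k-\alpha_k h(X_k)+\alpha_k Z_k$. Conditioning on the $\sigma$-algebra generated by $X_1,\dots,X_k$ (so that $Y_k$ and $h(X_k)$ are measurable and, by Assumption~\ref{assumption-noise}, the conditional law of $Z_k$ depends only on the realization of $X_k$) and then taking total expectation gives
\[
\Gamma_{k+1}(\lambda)=\mathbb E\!\left[\exp\!\big(\lambda^\top(I-\alpha_k H^\star)Y_k-\alpha_k\lambda^\top h(X_k)+\Lambda(\alpha_k\lambda;X_k)\big)\right].
\]
Replacing $\Lambda(\alpha_k\lambda;X_k)$ by $\Lambda(\alpha_k\lambda;x^\star)$ up to an additive error $L_\Lambda\alpha_k^2\|\lambda\|^2\|Y_k\|$ (Assumption~\ref{assumption-Lambda-smooth}), bounding $|\alpha_k\lambda^\top h(X_k)|\le\alpha_k\|\lambda\|\,\overline h(\|Y_k\|)$ via \eqref{def-max-residual-delta}, and using symmetry of $H^\star$, one arrives at
\[
\Gamma_{k+1}(\lambda)\le e^{\Lambda(\alpha_k\lambda;x^\star)}\,\mathbb E\!\left[\exp\!\big(((I-\alpha_k H^\star)\lambda)^\top Y_k+\alpha_k\|\lambda\|\,\overline h(\|Y_k\|)+L_\Lambda\alpha_k^2\|\lambda\|^2\|Y_k\|\big)\right].
\]

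The crux is closing this into a recursion in $\lambda$ alone, for which Lemma~\ref{lemma-HPB-Yk} is essential. On the good event $\{\|Y_k\|\le R\}$ the two error terms are dominated by deterministic quantities, so the bracket becomes $e^{(\text{deterministic})}\,\Gamma_k((I-\alpha_k H^\star)\lambda)$ and the recursion closes; the contribution of $\{\|Y_k\|>R\}$ is handled by the tail bound \eqref{eq-HPB-Yk} (together with $\overline h(\delta)\le 2L\delta$ and $\lambda^\top Y_k\le\|\lambda\|\|Y_k\|$) and is of strictly lower exponential order once $R$ is chosen proportional to $\|\lambda\|/B$, the proportionality constant involving $\overline\gamma$ through a uniform-in-$l$ bound (from Lemma~\ref{lemma-beta-bounds}) on the norms of the back-propagated test vectors. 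Iterating the resulting recursion from $k$ down to $1$ gives
\[
\Gamma_{k+1}(\lambda)\le\Big(\prod_{l=1}^{k}e^{\Lambda(\alpha_l\Phi_{l+1,k}\lambda;\,x^\star)}\Big)\,\Gamma_1(\Phi_{1,k}\lambda)\,e^{E_k(\lambda)},\qquad\Phi_{l,k}:=\prod_{j=l}^{k}(I-\alpha_j H^\star),
\]
with $E_k(\lambda)$ collecting the accumulated errors.

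Then I would substitute $k\lambda$, divide by $k$, and let $k\to\infty$. With $H^\star=QDQ^\top$, the $i$-th eigencomponent of $\alpha_l\Phi_{l+1,k}(k\lambda)$ is $\frac{ak}{l+k_0}\prod_{j=l+1}^{k}\!\big(1-\tfrac{a\rho_i}{j+k_0}\big)$; by Lemma~\ref{lemma-beta-bounds} (both parts) this is squeezed between two quantities that, along $l=\lceil\theta k\rceil$, both converge to $a\theta^{a\rho_i-1}$, so $\alpha_l\Phi_{l+1,k}(k\lambda)\to aQD(\theta)Q^\top\lambda$ and, by continuity of $\Lambda(\cdot;x^\star)$ (Lemma~\ref{lemma-properties-Lambda}), $\frac1k\sum_{l=1}^{k}\Lambda(\alpha_l\Phi_{l+1,k}(k\lambda);x^\star)$ is a Riemann sum converging to $\int_0^1\Lambda(aQD(\theta)Q^\top\lambda;x^\star)\,d\theta=\Psi^\star(\lambda)$ — finite since Assumption~\ref{assumption-subGaussian} dominates the integrand and $a\rho_i\ge a\mu>1$ makes $\theta\mapsto\theta^{a\rho_i-1}$ integrable. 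Moreover $\frac1k\log\Gamma_1(\Phi_{1,k}(k\lambda))=(\Phi_{1,k}\lambda)^\top(X_1-x^\star)\to 0$, because $a\mu>1$ forces $\Phi_{1,k}\to 0$ (Lemma~\ref{lemma-beta-bounds}, part~\ref{part-beta-UB}). Finally one checks $\frac1k E_k(k\lambda)\to r(\lambda)$: the two pieces of $E_k$ are sums over $l$ of $\alpha_l\|\Phi_{l+1,k}(k\lambda)\|\,\overline h(R)$ and $L_\Lambda\alpha_l^2\|\Phi_{l+1,k}(k\lambda)\|^2R$ with $R\le 2\overline\gamma\|\lambda\|/B$, and the estimates for $\sum_l\alpha_l^{p}\|\Phi_{l+1,k}\|^{p}$ coming out of Lemma~\ref{lemma-beta-bounds} (e.g.\ the telescoping identity $\sum_l\alpha_l\prod_{j>l}(1-\alpha_j\rho_i)=\rho_i^{-1}(1-\prod_{j=1}^{k}(1-\alpha_j\rho_i))\le 1/\mu\le a$ for the first-order sum) turn them, after division by $k$, into the $a\|\lambda\|\,\overline h(2\overline\gamma\|\lambda\|/B)$ and $4a^2\overline\gamma^2 L_\Lambda B^{-2}\|\lambda\|^4$ terms of $r(\lambda)$.

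The genuinely difficult step is the middle one: decoupling the MGF recursion from the random distance $\|Y_l\|$ while simultaneously keeping the bad-event contribution sub-exponential and the good radius $R$ at the right scale, and then carrying out the accounting that makes the accumulated errors reproduce $r(\lambda)$ exactly. The Riemann-sum convergence is also subtle, since both the number of summands and the magnitude of each back-propagated test vector grow with $k$. Everything else is bookkeeping with Lemmas~\ref{lemma-HPB-Yk} and~\ref{lemma-beta-bounds}.
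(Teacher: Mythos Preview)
Your proposal is correct and follows essentially the same route as the paper: back-propagate the test vector through the linearized dynamics $(I-\alpha_l H^\star)$, split on $\{\|Y_l\|\le\delta\}$ versus its complement, use Assumption~\ref{assumption-Lambda-smooth} and the Taylor residual on the good event to close the recursion $\Gamma_{l+1}(\eta_l)\le e^{\Lambda(\alpha_l\eta_l;x^\star)+r_0}\Gamma_l(\eta_{l-1})+\text{(bad term)}$, control the bad term via Lemma~\ref{lemma-HPB-Yk}, unroll, and identify the Riemann sum via Lemma~\ref{lemma-beta-bounds}.

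Two small points where your sketch diverges from the paper's mechanics. First, in the bad event the paper does not split into $h$- and linear-part contributions; it reverts to the pre-Taylor form $\eta_l^\top(X_l-\alpha_l g(X_l)-x^\star)$ and applies the contraction $\|X_l-\alpha_l g(X_l)-x^\star\|\le\overline\gamma\|X_l-x^\star\|$ directly---this is where $\overline\gamma$ actually enters, not through the back-propagated test vectors (those are bounded simply by $\|\eta_l\|\le(l+k_0)\|\lambda\|$). Second, the bad-event integral is shown, via integration by parts against the tail bound \eqref{eq-HPB-Yk}, to be bounded by a constant $K$ uniformly in $l$, yielding an \emph{additive} recursion; after unrolling this produces only a factor $(k{+}1)K$ in front of the dominant exponential, hence $\frac1k\log((k{+}1)K)\to0$. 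Your ``strictly lower exponential order'' is the right conclusion, but the additive-constant mechanism is worth making explicit.
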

The proof of Lemma~\ref{lemma-LMGF-limsup} is given in section~\ref{subsec-main-proof}. Having the limit in~\eqref{eq-LMGF-limsup}, LDP upper bound follows by Lemma~\ref{lemma-GE-auxilliary}. 
\begin{theorem}
\label{theorem-LDP-general}
Suppose that Assumptions~1-5 hold and that the stepsize is given by $\alpha_k=a/(k+k_0).$ Then, the sequence of iterates $X_k$ satisfies the LDP upper bound with rate function $\overline I$ given as the Fenchel-Legendre transform of $\overline \Psi$ from Lemma~\ref{lemma-LMGF-limsup}, i.e., for any closed set $F$:
\begin{equation}
\label{eq-LD-UB-general}
\limsup_{k\rightarrow +\infty} \frac{1}{k} \log \mathbb P\left(X_k \in F\right) \leq - \inf_{x + x^\star \in F} \overline I(x).    
\end{equation}
\end{theorem}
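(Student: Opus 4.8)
The plan is to derive Theorem~\ref{theorem-LDP-general} directly from Lemma~\ref{lemma-LMGF-limsup} together with the auxiliary Lemma~\ref{lemma-GE-auxilliary}, treating the statement essentially as a corollary. The only conceptual gap between the two is the cosmetic shift by $x^\star$: Lemma~\ref{lemma-GE-auxilliary} is stated for a sequence of measures $\mu_k$ whose log-moment generating functions $\Psi_k$ satisfy the $\limsup$ condition \eqref{eq-GE-condition}, whereas Lemma~\ref{lemma-LMGF-limsup} controls $\log\Gamma_k$, the LMGF of the centered iterates $X_k - x^\star$. So first I would introduce the shifted measures $\widetilde\mu_k$, the Borel measures on $\mathbb R^d$ induced by $X_k - x^\star$ (equivalently, $\widetilde\mu_k(\,\cdot\,) = \mu_k(\,\cdot\, + x^\star)$, with $\mu_k$ as defined at the end of Section~\ref{sec-Setup}), and note that by construction the LMGF of $\widetilde\mu_k$ is exactly $\log\Gamma_k$.

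The second step is to verify the hypotheses of Lemma~\ref{lemma-GE-auxilliary} with $\Psi_k := \log\Gamma_k$ and $\Psi := \overline\Psi$. The $\limsup$ bound \eqref{eq-GE-condition} is precisely the conclusion \eqref{eq-LMGF-limsup} of Lemma~\ref{lemma-LMGF-limsup}, valid since Assumptions~1--5 are in force. It remains to check finiteness: $\overline\Psi(\lambda) = \Psi^\star(\lambda) + r(\lambda) < \infty$ for every $\lambda \in \mathbb R^d$. The term $r(\lambda)$ is manifestly finite (a polynomial in $\|\lambda\|$ plus $a\|\lambda\|\,\overline h(2\overline\gamma\|\lambda\|/B)$, and $\overline h(\delta)$ is finite for each $\delta$ by Lemma~\ref{lemma-Taylor-small-remainder} — indeed $o(\delta)$). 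For $\Psi^\star(\lambda) = \int_0^1 \Lambda(aQD(\theta)Q^\top\lambda; x^\star)\,d\theta$, finiteness follows from Assumption~\ref{assumption-subGaussian}: $\Lambda(\eta; x^\star) \le C_1\|\eta\|^2/2$ for all $\eta$, so the integrand is bounded above by $C_1 a^2 \|D(\theta)Q^\top\lambda\|^2/2 \le C_1 a^2 \|\lambda\|^2 \max_i \theta^{2(a\rho_i-1)}/2$; since $a\rho_i \ge a\mu > 1$ by Assumption~\ref{assum-a-mu}, each exponent $2(a\rho_i - 1)$ is nonnegative, hence $\theta^{2(a\rho_i-1)} \le 1$ on $[0,1]$ and the integral is finite (in fact $\le C_1 a^2\|\lambda\|^2/2$). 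Also, $\Lambda(\cdot;x^\star) \ge 0$ by Lemma~\ref{lemma-properties-Lambda}, part~\ref{part-Lambda-nonnegative}, so the integral is well-defined. Thus Lemma~\ref{lemma-GE-auxilliary} applies and yields that $\widetilde\mu_k$ satisfies the LDP upper bound with rate function $\overline I$, the Fenchel--Legendre transform of $\overline\Psi$: for every closed $F \subseteq \mathbb R^d$, $\limsup_{k\to\infty}\frac1k\log\widetilde\mu_k(F) \le -\inf_{x\in F}\overline I(x)$.

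The third and final step is to translate this back to $X_k$. For a closed set $F$, the event $\{X_k \in F\}$ coincides with $\{X_k - x^\star \in F - x^\star\}$, i.e. $\mathbb P(X_k \in F) = \widetilde\mu_k(F - x^\star)$, and $F - x^\star$ is closed since translation is a homeomorphism. Applying the upper bound to the closed set $F - x^\star$ gives $\limsup_{k\to\infty}\frac1k\log\mathbb P(X_k\in F) \le -\inf_{y \in F - x^\star}\overline I(y) = -\inf_{x + x^\star \in F}\overline I(x)$, which is exactly \eqref{eq-LD-UB-general}. I would also remark that $\overline I$ is a genuine rate function in the sense of Definition~\ref{def-Rate-function}: as a Fenchel--Legendre transform (a supremum of affine functions), $\overline I$ is convex and lower semicontinuous, and $\overline I \ge 0$ because $\overline\Psi(0) = 0$ (both $\Psi^\star(0) = 0$ and $r(0) = 0$).

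I do not anticipate a serious obstacle here; the entire content of the theorem is packaged in Lemma~\ref{lemma-LMGF-limsup}, whose proof is deferred to Section~\ref{subsec-main-proof}. The only points requiring care are the $x^\star$-shift bookkeeping (making sure closedness is preserved and the infimum is taken over the right set) and the explicit finiteness check of $\Psi^\star$, which hinges on combining the sub-Gaussian bound of Assumption~\ref{assumption-subGaussian} with the condition $a\mu > 1$ of Assumption~\ref{assum-a-mu} to ensure the exponents $a\rho_i - 1$ are nonnegative so that $D(\theta)$ stays bounded on the unit interval; without $a\mu>1$ the integrand could blow up near $\theta = 0$ and $\Psi^\star$ might be infinite.
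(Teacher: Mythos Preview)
Your proposal is correct and follows exactly the paper's approach: the paper states in one sentence that ``Having the limit in~\eqref{eq-LMGF-limsup}, LDP upper bound follows by Lemma~\ref{lemma-GE-auxilliary},'' and you have simply spelled out the details (the $x^\star$-shift, the finiteness of $\overline\Psi$, and the rate-function property of $\overline I$) that the paper leaves implicit.
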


\begin{remark}
The rate function $\overline I$ depends on the Hessian matrix at the solution, $H(x^\star)$. However, coarser exponential rate bounds can be obtained by uniformly bounding the eigenvalues of $H(x^\star),$ as by our assumptions they are all confined in the interval $[\mu, L].$ See Appendix~D for details.   
\end{remark}

\subsection{Discussions and interpretations}
\subsubsection{Positivity of $\overline I$ and exponential decay} 
From the fact that $\Psi^\star,\, r \geq 0,$ and that both $\Psi^\star$ and $r$ are finite on $\mathbb R^d,$ it can be shown that $\overline I \geq 0$ and that $\overline I$ is a good rate function. Specifically, $\overline I(0)=0$ and $\overline I(x)>0$ for any $x\neq 0.$  Therefore, for any closed set $F$ such that $x^\star \notin F,$ we have
\begin{equation}
\inf_{x+x^\star \in F} \overline I(x)>0,    
\end{equation}
that is, the exponent in~\eqref{eq-LD-UB-general} is strictly positive ensuring the exponential decay of the probabilities $\mathbb P\left(X_k \in F\right)$.  To illustrate this in intuitive terms, we take as a special case the set $F = B_{x^\star}^{\mathrm{c}} (\delta),$ for some $\delta>0.$ Then, the event of interest becomes $\{X_k\in F\} =\{ \|X_k-x^\star\| \geq \delta\}.$ Thus, for any $\delta>0,$ Theorem~\ref{theorem-LDP-general} implies that
\begin{equation}
\label{eq-probability-epsilon}
\limsup_{k\rightarrow +\infty} \frac{1}{k} \log \mathbb P\left(\|X_k-x^\star\|\geq \delta\right) \leq - R(\delta),    
\end{equation}
where $R(\delta) = \inf_{\|x\|\geq \delta} I(x)>0.$ 
 
\subsubsection{Remainder term $r$}
\label{subsubsec-remainder}
Recalling Lemma~\ref{lemma-Taylor-small-remainder}, it is easy to see that $r(\lambda)= o(\|\lambda\|^2),$ i.e., $\lim_{\|\lambda\| \rightarrow 0} \frac{r(\lambda)}{\|\lambda\|^2} = 0.$ Also, for a function $f$ that has Lipschitz Hessian, see Remark~\ref{remark-Lipschitz-Hessian}, the residual function $r$ behaves roughly as $\sim \|\lambda\|^3$.

Further, for the special case when $f$ is quadratic, $\overline h(\delta) = 0,$ and hence $r$ contains only the first term, and thus $r(\lambda) \sim \|\lambda\|^4$. Similarly, when the noise distribution does not depend on the current iterate, we have that $L_{\Lambda}=0,$ and hence $r(\lambda)= o(\|\lambda\|^2).$ Finally, for the case when both of the preceding conditions hold, the residual term is zero at all points: $r\equiv 0,$ and hence the rate function $\overline I=I^\star,$  where $I^\star$ is the Fenchel-Legendre transform of $\Psi^\star.$

\subsubsection{Small deviations regime}
When high precision estimates are sought, or equivalently, for small $\delta$ in~\eqref{eq-probability-epsilon}, the candidate values of $\overline I$ in the minimization are very close to $0.$ By the fact that the remainder term $r(\lambda)=o(\|\lambda^2\|),$ it can be shown that, in the small deviations regime, $\overline I$ is determined by $\Psi^\star$ only, i.e., $\overline I \approx I^\star,$ and, also, its behaviour is dominantly characterized by the noise variance.     

\subsection{LDP for quadratic functions}
\label{subsec-LDP-quadratic}
In this section we provide the full LDP for the case when $f$ is a quadratic function. The proof of Theorem~\ref{theorem-quadratic-full-LDP} is given in Appendix~E.
 \begin{theorem}
\label{theorem-quadratic-full-LDP}
Suppose that the objective function $f$ is quadratic, that Assumptions~\ref{assum-a-mu}-\ref{assumption-noise} hold, with the step size given by $\alpha_k= a/k$. Suppose also that the noise distribution does not depend on the current iterate and that it has a finite log-moment generating function $\Lambda$. Then, the sequence $X_k$ satisfies the large deviations principle with the rate function $I^\star$ given as the conjugate of $\Psi^\star$ defined in~\eqref{eq-Psi-star}, with $\Lambda(\cdot;x^\star)$ replaced by $\Lambda.$  
\end{theorem}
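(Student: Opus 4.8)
The plan is to exploit that, for quadratic $f$, the SGD recursion is an exact time‑varying \emph{linear} recursion, so that $\Gamma_k$ can be written in closed product form and the limit $\lim_k \tfrac1k\log\Gamma_k(k\lambda)$ evaluated directly, bypassing the Taylor‑type approximations needed for general costs. By Remark~\ref{remark-h-for-quadratic-is-zero}, $g(x)=H^\star(x-x^\star)$, so, writing $Y_k := X_k-x^\star$, recursion~\eqref{eqn-SGD-basic} reads $Y_{k+1}=(I-\alpha_k H^\star)Y_k+\alpha_k Z_k$. Unrolling and diagonalizing $H^\star=QDQ^\top$ gives $Y_k=\Phi_{k,1}Y_1+\sum_{l=1}^{k-1}\alpha_l\Phi_{k,l+1}Z_l$, where $\Phi_{k,l}:=Q\,\mathrm{diag}\big(\prod_{j=l}^{k-1}(1-\alpha_j\rho_i)\big)Q^\top$ and $\Phi_{k,k}:=I$. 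Each scalar product $\prod_{j=l}^{k-1}(1-a\rho_i/j)$ is the $v=0$ instance of $\beta_{k,l}$ treated in Lemma~\ref{lemma-beta-bounds} (after a harmless shift of index to accommodate $b=0$), and its two‑sided bounds yield $\prod_{j=l}^{k-1}(1-a\rho_i/j)=(l/k)^{a\rho_i}(1+o(1))$; the finitely many indices $j<a\rho_i\le aL$ for which a factor is negative contribute only a bounded multiplicative constant, which is immaterial once we take $\tfrac1k\log$.

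Next I would compute the log‑moment generating function of the iterates. Under Assumption~\ref{assumption-noise} with iterate‑independent noise, $X_k$ is a deterministic function of $(Y_1,Z_1,\ldots,Z_{k-1})$ and $Z_k$ is independent of the past, so $Z_1,Z_2,\ldots$ are i.i.d.\ with common LMGF $\Lambda$. Since $Y_1$ is deterministic, independence gives
\[
\Gamma_k(k\lambda)=e^{\,k\lambda^\top\Phi_{k,1}Y_1}\prod_{l=1}^{k-1}\mathbb E\!\left[e^{\,k\alpha_l(\Phi_{k,l+1}^\top\lambda)^\top Z_l}\right]=e^{\,k\lambda^\top\Phi_{k,1}Y_1}\prod_{l=1}^{k-1}e^{\,\Lambda(k\alpha_l\Phi_{k,l+1}^\top\lambda)},
\]
hence $\tfrac1k\log\Gamma_k(k\lambda)=\lambda^\top\Phi_{k,1}Y_1+\tfrac1k\sum_{l=1}^{k-1}\Lambda(k\alpha_l\Phi_{k,l+1}^\top\lambda)$.

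Then I would pass to the limit. The transient term $\lambda^\top\Phi_{k,1}Y_1$ vanishes because $\|\Phi_{k,1}\|=O(k^{-a\mu})$ by Lemma~\ref{lemma-beta-bounds} and $a\mu>1$ by Assumption~\ref{assum-a-mu}. For the sum, writing $\theta=l/k$ and using the product asymptotics,
\[
k\alpha_l\Phi_{k,l+1}^\top\lambda=aQ\,\mathrm{diag}\!\big(\tfrac{k}{l}\textstyle\prod_{j=l+1}^{k-1}(1-a\rho_i/j)\big)Q^\top\lambda\ \longrightarrow\ aQD(\theta)Q^\top\lambda,
\]
which is precisely the argument appearing in~\eqref{eq-Psi-star}; since $a\rho_i-1\ge a\mu-1>0$ the matrix $D(\theta)$ is bounded on $[0,1]$, so all these arguments lie in a fixed compact set. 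Continuity of $\Lambda$ together with this uniform control identifies $\tfrac1k\sum_l\Lambda(k\alpha_l\Phi_{k,l+1}^\top\lambda)$ as a Riemann sum converging to $\int_0^1\Lambda(aQD(\theta)Q^\top\lambda)\,d\theta=\Psi^\star(\lambda)$ (with $\Lambda(\cdot;x^\star)$ replaced by $\Lambda$, as the noise does not depend on the iterate). Thus $\lim_k\tfrac1k\log\Gamma_k(k\lambda)=\Psi^\star(\lambda)$, finite on all of $\mathbb R^d$ since $D(\theta)$ is bounded and $\Lambda$ is finite everywhere; moreover $\Psi^\star$ is convex (an integral of convex functions) and differentiable (differentiate under the integral, using differentiability of $\Lambda$), and since $\mathrm{dom}\,\Psi^\star=\mathbb R^d$ has empty boundary, essential smoothness is automatic. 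As the limit holds \emph{with equality}, the second part of Lemma~\ref{lemma-GE-auxilliary} (the G\"{a}rtner--Ellis theorem) applies and yields the full LDP for $X_k$ with rate function $I^\star$, the Fenchel--Legendre transform of $\Psi^\star$ --- the claim.

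I expect the main obstacle to be the uniform estimate of the finite products $\prod_{j=l+1}^{k-1}(1-a\rho_i/j)$ against $(l/k)^{a\rho_i}$, simultaneously over all $l\in\{1,\ldots,k-1\}$ and over the few small $j$ where a factor changes sign, at a precision good enough both to kill the transient $\lambda^\top\Phi_{k,1}Y_1$ and to bound the gap between the true sum and its Riemann‑sum surrogate by $o(k)$. This is exactly where Lemma~\ref{lemma-beta-bounds} and the hypothesis $a\mu>1$ enter; the latter also guarantees the boundedness of $D(\theta)$ near $\theta=0$ that the Riemann‑sum argument requires.
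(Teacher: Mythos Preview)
Your approach is essentially the same as the paper's: unroll the linear recursion for the centered iterate, use independence of the $Z_l$'s to factor $\Gamma_k$, identify $\tfrac1k\log\Gamma_k(k\lambda)$ as a Riemann sum converging to $\Psi^\star(\lambda)$ via the $\beta_{k,l}$ asymptotics, and invoke Lemma~\ref{lemma-GE-auxilliary}. Your write-up is in fact more careful than the paper's terse Appendix~E proof on two points the paper leaves implicit: the finitely many negative factors $(1-a\rho_i/j)$ for small $j$, and the verification of essential smoothness needed for the lower bound in G\"artner--Ellis.
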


The rate function $I^\star$ depends on the distribution of $Z_k$ and fully captures all moments of this distribution. In particular, for non-Gaussian distributions, it captures exactly the dependence not only on the variance, but also on higher order moments. 

\begin{remark}
We note that, in contrast with Theorem~\ref{theorem-LDP-general}, for Theorem~\ref{theorem-quadratic-full-LDP} the conditional distribution of $Z_k$ can be arbitrary, as long as $\Lambda$ is finite. In particular, it allows for distributions that are not light-tailed, such as Laplacian.    
\end{remark}

\begin{remark}
Recalling the discussion from subsection~\ref{subsubsec-remainder}, we see that the upper bound rate function from Theorem~\ref{theorem-LDP-general} and the rate function from Theorem~\ref{theorem-quadratic-full-LDP} match, hence showing that the bound in Theorem~\ref{theorem-LDP-general} is tight.
\end{remark}

\section{Gaussian noise: analytical characterization of the rate function}
\label{sec-analytical-Gaussian}

 If the noise $Z_k$ has a Gaussian distribution with mean value zero and covariance matrix $\Sigma$, then $\Psi^\star$ is computed by
\begin{equation}
 \Psi^\star (\lambda)= \frac{a^2}{2} \int_0^1 \lambda^\top Q D(\theta) Q^\top \Sigma  Q D(\theta) Q^\top \lambda d\theta.  \end{equation}

To simplify the notation, let $S= Q^\top \Sigma Q,$ and $M(\theta) = D(\theta) S D(\theta).$ It is easy to verify that $M_{ij}(\theta) = S_{ij} \theta^{a(\rho_i+\rho_j)-2},$ for any $i,j=1,...,d,$ and thus $\int_{0}^1 M_{ij}(\theta) d\theta = S_{ij}/(a(\rho_i+\rho_j)-1).$ Hence, we obtain the following closed-form expression for $\Psi^\star:$
\begin{equation}
\Psi^\star (\lambda)= \frac{a^2}{2} \lambda^\top Q S^\star Q \lambda,    
\end{equation}
where $ S_{ij}^\star = S_{ij}/(a(\rho_i+\rho_j)-1),$ for $i,j=1,...,d.$

Recalling the Definition~\ref{def-FL-transform}, it can be shown that the Fenchel-Legendre transform $I^\star$ of $\Psi^\star$ is given by 
\begin{equation}
I^\star(z) = \frac{1}{2 a^2} z^\top Q^\top  {S^\star}^{-1} Q z.    
\end{equation}

To obtain further intuition about the rate function $I^\star$, we consider the special case when the Hessian matrix $H^\star$ and the covariance matrix $\Sigma$ share the same eigenspace (given by the columns of the matrix $Q$). Intuitively, the latter means that the orientation of the quadratic approximation of $f$ at the origin is aligned with the gradient noise distribution in each of the axes.  In this case, it follows that $S = Q^\top \Sigma Q$ is diagonal with $S_{ii} = \sigma_{ii}^2,$ where $\sigma_{ii}^2$ is the $i$-th eigenvalue of $\Sigma$ (i.e., the eigenvalue of $\Sigma$ corresponding to its eigenvector given by the $i$-th column of matrix $Q$). It follows that $S^\star$ is also diagonal with $S_{ii}^\star = \sigma_{ii}^2/(2 a\rho_i-1).$ Thus, the following neat expression for the rate function $I^\star$ emerges:
\begin{equation}
\label{eq-rate-fcn-aligned-H-and-noise}
I(z) = \frac{1}{2 a^2} z^\top Q^\top  \mathrm{diag} \left( \frac{2 a\rho_1-1}{ \sigma_{11}^2},..., \frac{2 a\rho_d-1}{ \sigma_{dd}^2} \right)  Q z.    
\end{equation}
 
\subsection{Decay rates with $l_2$ balls}
\label{subsec-rates-for-l2-balls}

We consider the case when in the large deviations event of interest $\{X_k \in F\}$ the set $F$ is given as the complement of an $l_2$ ball around the solution $x^\star:$ $F = B^{\mathrm{c}}_{x^\star}(\delta),$ i.e., $\{X_k \in F\} = \{\|X_k-x^\star\| \geq \delta\|.$ Assuming that the residual is zero (see the result for quadratic functions in Section~\ref{subsec-LDP-quadratic}), by Theorem~\ref{theorem-LDP-general}, we have
\begin{equation}
\limsup_{k\rightarrow +\infty} \frac{1}{k}\log \mathbb P\left( \left\|X_k-x^\star \right\|\geq \delta \right) \leq \inf_{\|z\| \geq \delta} I(z) =: \mathbf I (B^{\mathrm{c}}_{x^\star}(\delta)).    
\end{equation}

For the Gaussian noise assumed in this section, we have:
\begin{align}
\label{eq-rate-ball-sets}
\mathbf I (B^{\mathrm{c}}_{x^\star}(\delta)) & = \inf_{\|z\|\geq \delta } \frac{1}{2 a^2} z^\top Q^\top  {S^\star}^{-1} Q z \nonumber \nonumber \\
& = \frac{\delta^2}{2 a^2} \inf_{\|w\|\geq 1}  w^\top Q^\top  {S^\star}^{-1} Q w \nonumber \\
& = \frac{\delta^2}{2 a^2} \frac{1}{ \lambda_{\max} (S^\star)},
\end{align}
where $\lambda_{\max} (S^\star)$ is the largest eigenvalue of the matrix $S^\star.$ Hence, to find the value of the exponent $\mathbf I$ for any given ball-shaped set, it suffices to find (once) the maximal eigenvalue of $S^\star,$ and the exponent $\mathbf I$ would be easily computed by the quadratic function~\eqref{eq-rate-ball-sets}.   

We close the analysis with a particularly elegant solution for the special case when $H^\star$ and $\Sigma$ are axes-aligned. As detailed at the beginning of the section,  in the latter case, $S^\star$ is diagonal, with $S_{ii}^\star=\sigma_{ii}^2/(2 a\rho_i-1),$ and the rate function is given by~\eqref{eq-rate-fcn-aligned-H-and-noise}. Thus, to find the maximal eigenvalue of $S^\star$ reduces to finding the index $i$ for which $ \frac{\sigma^2_{ii}}{2 a\rho_i-1}$ is highest, or, equivalently, $\frac{2a \rho_i-1}{\sigma^2_{ii}}$ the lowest, which then yields: 
\begin{align}
\label{eq-rate-ball-sets-aligned-H-and-noise}
\mathbf I (B^{\mathrm{c}}_{x^\star}(\delta)) = \frac{\delta^2}{2 a^2} \min \{ \frac{2a \rho_i-1}{\sigma^2_{ii}}: i=1,...d\},
\end{align}
where, we recall, $\rho_i$ is the $i$-th eigenvalue of $H^\star.$ What the expression above is saying is that, in order to find the exponential decay rate for an $l_2$ ball, we should search for the direction $i$ in which the value $\frac{\sigma^2_{ii}}{2 \rho_i-1}$ is highest.  In a sense, the latter quantity can be thought of as the effective noise variance, capturing the interplay between the noise distribution and the shape of the function at the solution. Specifically, if along the direction where the noise variance is highest, say $i^\star,$ the function has a high curvature (i.e., large $\rho_{i^\star}$), this will effectively alleviate the effects of noise and increase the rate function, in comparison to the case when the curvature along $i$ is lower, and therefore result in faster convergence.  

Finally, when the noise is isotropic, i.e., such that $\sigma_{ii}^2 = \sigma^2,$ for all $i,$ exploiting the fact that the spectrum of $H^\star$ lies inside the interval $[\mu, L],$ the rate function is found by:
\begin{align}
\label{eq-rate-ball-sets-isotropic-noise}
\mathbf I (B^{\mathrm{c}}_{x^\star}(\delta)) = \frac{\delta^2}{2 a^2}  \frac{2 a\mu -1}{\sigma^2 }.
\end{align}

\subsection{Comparison with the rate from Lemma~\ref{lemma-HPB-Yk}}
\label{subsec-rate-comparison}

We now compare the rate function bounds obtained from Lemma~\ref{lemma-HPB-Yk} and Theorem~\ref{theorem-LDP-general}. To gain deeper insights, we will assume that the residual term $r$ equals zero (compare with Section~\ref{subsec-LDP-quadratic}). We also assume that the noise is Gaussian and axes-aligned with the matrix $H^\star$ (see the preceding subsection). The exponent $B$ from~\ref{lemma-HPB-Yk} can be upper bounded by\footnote{The dependence in $B$ on $X_1$ in Lemma~\ref{lemma-HPB-Yk} seems to be an artifact of the conducted proof method, rather than an essential property of the exponential rate that Lemma~\ref{lemma-HPB-Yk} pursues. Hence, for unbiased comparison, we omit this factor in the analysis of the rate $B$.} 
\begin{equation*}
B \leq \frac{2a\mu -1}{ 4 {\sigma_{\max}}^2  a^2},    
\end{equation*}
where we exploited the fact that, for Gaussian noise, $C_1 = \sigma_{\max}^2,$ see Remark~\ref{remark-subGaussian-Gaussian-special-case}.
Hence, for an $l_2$ ball of radius $\delta,$ the exponent that Lemma~\ref{lemma-HPB-Yk} provides is bounded by
\begin{equation}
\label{eq-rate-Lemma-HPB}
B \delta^2\leq \frac{\delta^2}{4 a^2} \frac{2a\mu -1} {\sigma_{\max}^2 }.     
\end{equation}
The counterpart obtained from Theorem~\ref{theorem-LDP-general} is given by expression~\eqref{eq-rate-ball-sets-aligned-H-and-noise}. To show direct comparison with~\eqref{eq-rate-Lemma-HPB}, we further upper bound this value by decoupling the minimization over $i:$  
\begin{align}
\label{eq-rate-Theorem-general}
\mathbf I (B^{\mathrm{c}}_{x^\star}(\delta))&  =  \frac{\delta^2}{2 a^2} \min \{ \frac{2a \rho_i-1}{\sigma^2_{ii}}: i=1,...d\} \nonumber\\
& \geq \frac{\delta^2}{2 a^2} \frac{\min \{2a \rho_i-1: i=1,..,d\} }{\max \{\sigma_{ii}^2: i=1,..,d\}} \nonumber \\
& = \frac{\delta^2}{2 a^2} \frac{2a\mu -1}{\sigma_{\max}^2}.
\end{align}
Comparing with~\eqref{eq-rate-Lemma-HPB}  (and ignoring the scaling constant $2$), the following important point can be noted: on intuitive level, the derivation of the rate $B$ is equivalent to that of decoupling the effects of the noise distribution and the shape of the function $f$ at the origin with the rate $I^\star$. Hence, in contrast with $I^\star,$ the rate $B$ is oblivious to the interplay between these two quantities -- from a purely technical perspective, this distinction is a consequence of relying on recursions on the iterates' distance to solution, $\|X_k-x^\star\|,$ as opposed to working directly with the iterates $X_k,$ as is the case in the proof of Theorem~\ref{theorem-LDP-general}. 

\section{Proof of Lemma~\ref{lemma-LMGF-limsup}}
\label{subsec-main-proof}
This section provides the main elements of the proof of the limit in~\eqref{eq-LMGF-limsup}; the proofs of omitted results can be found in Appendix~C. Fix $\lambda \in \mathbb R^d.$ Fix $k\geq 1.$ Define $\eta_l= B_{k,l} \eta_k,$ $B_{k,l}=(I-\alpha_l H^\star)\cdots (I-\alpha_k H^\star),$ $\eta_k= k \lambda.$ By Lemma~\ref{lemma-beta-bounds},  
\begin{equation}
\label{eq-eta-l-bound}
\|\eta_l \| \leq k \left(\frac{l+k_0}{k+k_0+1}\right)^{a \mu} \|\lambda\| \leq (l+k_0) \|\lambda\|.  
\end{equation}
For an arbitrary $l\leq k,$ there holds
\begin{align}
\Gamma_{l+1}(\eta_l) & = \mathbb E\left[ \exp\left(\eta_l^\top (X_{l+1}-x^\star)\right)\right]\nonumber \\
& = \mathbb E\left[\mathbb E\left[ \exp\left(\eta_l^\top (X_l - \alpha_l g(X_l)+\alpha_l Z_l - x^\star) \right) | X_l\right] \right] \nonumber \\
& = \mathbb E\left[ \exp\left(\Lambda (\alpha_l \eta_l; X_l)+ \eta_l^\top (X_l-\alpha_l g(X_l)-x^\star)\right) \right]\nonumber \\
& = \int_{x\in \mathbb R^d} \Gamma_{l+1|l}(\eta_l;x) \mu_l(dx),
\end{align}
where $\Gamma_{l+1|l}(\cdot;x)$ denotes the conditional moment generating function of $X_{l+1},$ given $X_l=x.$ We now fix $\delta>0$ (the exact value to be chosen later) and split the analysis in two cases: 1) $\mathcal A_{l,\delta} = \left\{X_l \in B_{x^\star}(\delta)\right\};$ and 2) $\mathcal A^{\mathrm{c}}_{l,\delta} = \left\{X_l \in B^{\mathrm{c}}_{x^\star}(\delta)\right\}.$

Introduce 
\begin{align}
\Gamma_{l+1| \mathcal A_{l,\delta}}(\eta_l) & : = \mathbb E \left[ 1_{ \|X_l-x^\star\| \leq \delta}\Gamma_{l+1|l}(\eta_l;X_l)\right] \nonumber\\
& = \int_{\|x-x^\star\| \leq \delta} \Gamma_{l+1|l}(\eta_l;x)  \mu_l(dx)\label{eq-Psi-mathcal-A-l}\\
\Gamma_{l+1| \mathcal A^{\mathrm{c}}_{l,\delta}} (\eta_l) & : = \mathbb E \left[ 1_{ \|X_l-x^\star\| > \delta}\Gamma_{l+1|l}(\eta_l;X_l)\right] \nonumber\\
& = \int_{\|x-x^\star\| > \delta} \Gamma_{l+1|l}(\eta_l;x)  \mu_l(dx); \label{eq-Psi-mathcal-A-l-c}
\end{align}
note that 
\begin{equation}
\label{Gamma-two-terms}
\Gamma_{l+1}(\eta_l) = \Gamma_{l+1| A_{l,\delta}}(\eta_l)+ \Gamma_{l+1| A^{\mathrm{c}}_{l,\delta}}(\eta_l).
\end{equation}

\emph{Case 1: $x \in \mathcal A_{l,\delta}$.} Fix $x \in \mathbb R^d$ such that $\|x\|\leq \delta.$ 
We have:
\begin{align}
 \Gamma_{l+1|l}(\eta_l;x)  & = \exp\left(\Lambda (\alpha_l \eta_l; x) + \eta_l^\top (x-\alpha_l g(x)-x^\star)\right) \nonumber\\
& = \exp\left(\Lambda (\alpha_l \eta_l; x) + \eta_l^\top ((I-\alpha_l H^\star)(x-x^\star) -\alpha_l h(x)\right) \nonumber \\
&\leq \exp\left(\Lambda(\alpha_l \eta_l; x^\star ) + L_{\Lambda} \|\eta_l^2\| \|x-x^\star\|  + \alpha_l \|\eta_l\| \|h(x)\|\right) \nonumber \\
\label{eq-mathcal-A-l-1} 
& \,\,\,\,\,\times \exp\left( \eta_l^\top ((I-\alpha_l H^\star)(x-x^\star)\right) \\
\label{eq-mathcal-A-l-2} & \leq \exp\left(\Lambda(\alpha_l \eta_l; x^\star ) + L_{\Lambda} \alpha_l^2 \|\eta_l^2\| \delta^2  + \alpha_l \|\eta_l\| \overline h(\delta) + \eta_{l-1}^\top(x-x^\star)\right),
\end{align}
where in~\eqref{eq-mathcal-A-l-1} we used Lipschitz continuity of $\Lambda$ in $x$, Assumption~\ref{assumption-Lambda-smooth}, and in~\eqref{eq-mathcal-A-l-2} we used the fact that $\|x-x^\star\|\leq \delta$. It follows that
\begin{equation}
\label{eq-recursion-Psi-l-mathcal-A-l}
\Gamma_{l+1| \mathcal A_{\delta}}(\eta_l) \leq \exp\left(\Lambda(\alpha_l \eta_l; x^\star ) + r_0(\lambda, \delta)\right)  \Gamma_l( \eta_{l-1}),
\end{equation}
where $r_0(\lambda, \delta) = L_{\Lambda} a^2 \|\lambda\|^2  \delta^2  + a \|\lambda\| \overline h(\delta).$

\emph{Case 2: $x \in \mathcal A^{\mathrm{c}}_{l,\delta}$}. By strong convexity and Lipschitz smoothness of $f$ in Assumption~\ref{assum-L-mu}, for each $l\geq 1,$ the following holds:
\begin{align}
\label{eq-recursion-xi-alt}
\|X_l - g(X_l) - x^\star\| & \leq \gamma_l \|X_l-x^\star\|, \\
\label{eq-recursion-xi-alt-sup}
& \leq \overline \gamma \|X_l-x^\star\| \end{align}
where $\gamma_l = (1-2\alpha_l \mu + \alpha_l^2 L^2)^{1/2}$, see Appendix~A for the proof, and $\overline \gamma = \sup \{\gamma_l: l=1,2,...\};$ it is easy to verify that $\overline \gamma = \max\{1,  \sqrt{(1-a \mu)^2 + a^2 (L^2 - a^2)}\}.$  

For an arbitrary $x\in \mathbb R^d,$ we have:
\begin{align}
\Gamma_{l+1|l}(\eta_l;x) &  = \exp\left(\Lambda (\alpha_l \eta_l; x) + \eta_l^\top (x-\alpha_l g(x)-x^\star)\right) \nonumber\\
\label{eq-case-2-ineq-1}
& \leq \exp\left(\frac{C_1 \alpha_l^2 \|\eta_l\|^2}{2}\right) \exp\left(\overline \gamma \|\eta_l\| \|x-x^\star\|\right), \\
\label{eq-case-2-ineq-2}
& \leq \exp\left(\frac{C_1 a^2  \|\lambda\|^2}{2}\right) \exp\left(\overline \gamma (l+k_0) \|\lambda\| \|x-x^\star\|\right),
\end{align}
where in~\eqref{eq-case-2-ineq-1} we used the assumption that $Z_k$ is sub-Gaussian, Assumption~\ref{assumption-subGaussian}, for the first term, together with~\eqref{eq-recursion-xi-alt} and Cauchy-Schwartz, for the second term, while in~\eqref{eq-case-2-ineq-2} we exploited~\eqref{eq-eta-l-bound}. Recalling the induced measure $\nu_l,$  we now have
\begin{align}
\label{eq-Gamma-apply-integration-by-parts}
\Gamma_{l+1| \mathcal A^{\mathrm{c}}_{l,\delta}} (\eta_l) & \leq \exp\left(\frac{C_1 a^2  \|\lambda\|^2}{2}\right)
\int_{z \geq  \delta} e^{(l+k_0) \overline \gamma \|\lambda\| z} \nu_l(d z).
\end{align}

The idea of analysing the ``tail'' term $\Gamma_{l+1| \mathcal A^{\mathrm{c}}_{l,\delta}} (\eta_l)$ is the following: by Theorem~\ref{lemma-HPB-Yk}, we know that the probability density $\nu_l$ at a given point $z$ behaves roughly as $e^{-(l+k_0) B z^2}.$ If $\delta$ is sufficiently large, then, for all $z\geq \delta,$ the negative exponential rate of the measure $\nu_l (z)$ is in absolute terms higher than the exponent $(l+k_0) \overline \gamma  \|\lambda\|z.$ Integrating by parts, we obtain that for $\delta = \frac{2 \overline \gamma \|\lambda\|}{B},$ the integral on the right hand-side of~\eqref{eq-Gamma-apply-integration-by-parts} is upper bounded by a constant $K.$  Thus: 
\begin{equation}
\Gamma_{l+1| \mathcal A^{\mathrm{c}}_{l,\delta}} (\eta_l) \leq K \exp\left(\frac{C_1 a^2  \|\lambda\|^2}{2}\right).
\end{equation}
Combining with~\eqref{eq-recursion-Psi-l-mathcal-A-l} and recalling~\eqref{Gamma-two-terms},
\begin{equation}
\label{Gamma-two-terms-1}
\Gamma_{l+1}(\eta_l)  \leq   \exp\left( \Lambda(\alpha_l \eta_l; x^\star )+r(\lambda)\right) \Gamma_l( \eta_{l-1})  \nonumber\\
 +  K \exp\left(\frac{C_1 a^2  \|\lambda\|^2}{2}\right),
\end{equation}
where $r(\lambda) = r_0\left(\lambda,\frac{2 \overline \gamma \|\lambda\|}{B} \right).$ Iterating the preceding recursion, where we exploit the nonnegativity of $\Lambda,$ property 3. from Lemma~\ref{lemma-properties-Lambda}, we obtain:
\begin{align}
\label{eq-unwinded-recursion}
& \Gamma_{k+1}(k\lambda) \leq \exp\left(\sum_{l=1}^k \left(\Lambda(\alpha_l \eta_l; x^\star )+ r(\lambda)\right) \right) \Gamma_1(\alpha_1 \eta_1) \nonumber \\
& + K \exp\left(\frac{C_1 a^2  \|\lambda\|^2}{2}\right)\sum_{l=1}^k    e^{\sum_{j=l}^k \left(\Lambda(\alpha_j \eta_j; x^\star )+ r(\lambda)\right)}\nonumber \\
& \leq (k+1) K \exp\left(\frac{C_1 a^2  \|\lambda\|^2}{2} + a\|\lambda\| \|X_1-x^\star\| \right)  \nonumber \\
& \;\;\;\;\;\;\;\times e^{\sum_{l=1}^k \left(\Lambda(\alpha_l \eta_l; x^\star )+ r(\lambda)\right)}.
\end{align}
Taking the limit, dividing by $k,$ and taking the $\limsup$
\begin{align} 
\label{eq-Lambda-sum-in-RHS}
& \limsup_{k\rightarrow +\infty} \frac{1}{k} \log \Gamma_{k+1}(k\lambda) \leq \nonumber \\
& \;\;\;\;\;\;\;\;\;\;\;\;\;\;\;r(\lambda)+ \limsup_{k\rightarrow+\infty}  \frac{1}{k} \sum_{l=1}^k \Lambda(\alpha_l \eta_l; x^\star).       
\end{align}
Finally, it can be shown that
\begin{equation}
\label{eq-limit-Lambda-sum-equals-integral}
\lim_{k\rightarrow+\infty} \frac{1}{k} \sum_{l=1}^k \Lambda(\alpha_l \eta_l; x^\star ) = \int_{0}^1 \Lambda (a Q D(\theta)Q^\top \lambda; x^\star) d\theta.    
\end{equation}
The proof of~\eqref{eq-limit-Lambda-sum-equals-integral} is provided in Appendix~C. This completes the proof of Lemma~\ref{lemma-LMGF-limsup}.

\section{Conclusions}
We developed large deviations analysis for the stochastic gradient descent (SGD) method, when the objective function is smooth and strongly convex. 
For (strongly convex) quadratic costs, we establish the full large deviations principle. That is, we derive the exact exponential rate of decay of the probability that the iterate sequence generated by SGD stays within an arbitrary set that is away from the problem solution. This is achieved 
for a very general class of gradient noises, that may be iteration-dependent and are required to have a finite log-moment generating function. For generic costs, we derive a tight large deviations upper bound that, up to higher order terms, matches the exact rate derived for the quadratics.



\printbibliography


\newpage

\appendix
\section*{Appendix A.}
\label{app:preliminaries}

\begin{proof} [Proof of recursion~\ref{eq-recursion-xi}]
For any $k\geq 1,$ we have:
\begin{align}
\label{eq-recursion-xi-2}
\|X_{k+1}-x^\star\| & = \|X_{k} - \alpha_k g(X_k) + \alpha_k Z_k - x^\star\|\nonumber\\
& =  \|X_{k} -  x^\star\|^2 -2 \alpha_k (X_k-x^\star)^\top (g(X_k) - Z_k) \nonumber\\
&+ \alpha_k^2 \|g(X_k) - Z_k\|^2 \nonumber\\
& \leq (1-2\alpha_k \mu) \xi_k + 
2 \alpha_k (X_k-x^\star)^\top Z_k \nonumber\\
&+2 \alpha_k^2 \|g(X_k)\|+ 2\alpha_k^2 \|Z_k\|^2\nonumber \\
& \leq \left(1-2\alpha_k \mu + 2 \alpha_k^2 L^2\right) \xi_k + 2 \alpha_k (X_k-x^\star)^\top Z_k \nonumber\\
& + 2 \alpha_k^2 \|Z_k\|^2,
\end{align}
where the first inequality follows from the strong convexity of $f,$ Assumption~\ref{assum-L-mu}, and the fact that, for $a,b\in \mathbb R^d,$ $\|a-b\|^2\leq 2 \|a\|^2+ 2 \|b\|^2,$  and the second inequality follows from the Lipschitz smoothness of $f$, Assumption~\ref{assum-L-mu}. 
\end{proof}

\begin{proof}[Proof of Lemma~\ref{lemma-beta-bounds}]

Fix $l$ and $k$ where $1\leq l \leq k.$ Fix $u,\,v\geq 0.$ From the upper and the lower Darboux sum for the logarithmic function applied to the interval $[l,k]$, we obtain:
\begin{equation}
\label{eq-bounds-on-harmonic}
\log \frac{k+1}{l} \leq \frac{1}{l}+\ldots+\frac{1}{k} \leq \log \frac{k}{l-1}.    
\end{equation}
For the $2$-sum we use the following simple bound $1/l^2 \leq 1/(l(l-1))=1/(l-1)-1/l$ to obtain:
\begin{equation}
\label{eq-bounds-on-2-sum}
\frac{1}{l^2}+\ldots+\frac{1}{k^2} \leq \frac{1}{l-1} - \frac{1}{l}+\ldots + \frac{1}{k-1} - \frac{1}{k} \leq \frac{1}{l-1}.    
\end{equation}

To prove part 1, we use that $1+x \leq e^x$ applied to each of the terms in the product $\beta_{k,l},$ together with the left hand-side inequality of~\eqref{eq-bounds-on-harmonic} and the right hand-side inequality of~\eqref{eq-bounds-on-2-sum}:  
\begin{align}
\beta_{k,l}(u,v) & \leq e^{-a u \sum_{j=l}^k \frac{1}{j+b} + a^2 v \sum_{j=l}^k \frac{1}{(j+b)^2}}\nonumber\\
& \leq e^{-au \log \left(\frac {k+b+1}{l+b}\right)+ \frac{a^2v}{l+b-1}} \nonumber \\
& = \left(\frac{l+b}{k+b+1}\right)^{au} e^{\frac{a^2 v}{l+b-1}}.
\end{align}

To prove part 2, we first note that, since $v\geq 0,$ there holds $\beta_{k,l}(u,v)\geq \beta_{k,l}(u,0),$ i.e., $\beta_{k,l}(u,v)\geq (1-\alpha_k u)\cdots (1-\alpha_l u).$ We now use that, for $x\leq \frac{2}{5},$ $1-x \geq e^{-x-x^2}:$ 
\begin{align}
\beta_{k,l}(u,v) & \geq e^{- a u \sum_{j=l}^k \frac{1}{j+b} - a^2 u^2 \sum_{j=l}^k \frac{1}{(j+b)^2}} \nonumber \\
& \geq \left(\frac{l+b-1}{k+b}\right)^{au} e^{- \frac{a^2 u^2}{l+b-1}}.
\end{align}
This completes the proof of the lemma. 
\end{proof}

\begin{proof} [Proof of~\eqref{eq-recursion-xi-alt}]
Here we prove an alternative recursion on $\|X_k-x^\star\|$, used within the proof of Lemma~\ref{lemma-LMGF-limsup}. Specifically, we show that, for any $k,$
\begin{equation}
\label{eq-recursion-xi-alt-2}
\|X_{k+1}-x^\star\| \leq \gamma_k \|X_k-x^\star\| + \alpha_k \|Z_k\|, 
\end{equation}
where, we recall, $\gamma_k = \left(1-2\alpha_k \mu + \alpha_k^2 L^2\right)^{1/2}.$

From the triangle inequality applied to the Euclidean norm,
\begin{align}
\label{eq-triangle-ineq-xi}
\|X_{k+1}-x^\star\| & = \|X_{k} - \alpha_k g(X_k) + \alpha_k Z_k - x^\star\|\nonumber\\
& \leq  \|X_{k} - \alpha_k g(X_k) - x^\star\| + \alpha_k \|Z_k\|
\end{align}
Exploiting $L$-smoothness and $\mu$- convexity of $f$ for the second term:
\begin{align}
\label{eq-L-mu-exploit}
& \| X_{k} - \alpha_k g(X_k) -  x^\star\|^2 \leq \nonumber\\
& \|X_k - x^\star\|^2 - 2\alpha_k (X_k - x^\star)^\top g(X_k) + \alpha_k^2 \|g(X_k)\| \nonumber\\
& \leq \|X_k - x^\star\|^2 - 2\alpha_k \mu \|X_k - x^\star\|^2 + \alpha_k^2 L^2  \|X_k - x^\star\|^2\nonumber\\
& = \gamma_k^2 \|X_k - x^\star\|^2. 
\end{align}
Taking the square root and replacing in~\eqref{eq-triangle-ineq-xi} yields~\eqref{eq-recursion-xi-alt-2}. 
\end{proof}

\section*{Appendix B.}
\label{app:proof-lemma-HPB}

\begin{proof} [Proof of Lemma~\ref{lemma-HPB-Yk}.]
First, we transform the recursion in~\eqref{eq-recursion-xi} by defining $Y_{k+1} = (k+k_0) \|X_{k+1}-x^\star\|^2,$ to obtain: 
\begin{equation}
\label{eq-recursion-Y}
Y_{k+1} \leq a_k Y_k - b_k \sqrt {k+k_0-1}(X_k-x^\star)^\top Z_k + c_k \|Z_k\|^2,      \end{equation}
where 
\begin{align}
a_k & =\frac{k+k_0}{k+k_0-1} (1-2\alpha_k \mu + 2 \alpha_k^2 L^2)\\
b_k & = \frac{a}{\sqrt{k+k_0-1}} \\
c_k & = \frac{a^2}{k+k_0}.
\end{align}

The key technical result behind Lemma~\ref{lemma-HPB-Yk} is the following upper bound on the tail probability of the $Y_k$ iterates:
\begin{equation}
\label{eq-HPB-Yk-2}
\mathbb P\left( Y_k \geq \epsilon\right) \leq e e^{-B \epsilon}, 
\end{equation}
which holds for each $k\geq 1,$ and $\epsilon \geq 0.$ The result of Lemma~\ref{lemma-HPB-Yk} directly follows from~\eqref{eq-HPB-Yk-2} by taking $\epsilon_k = k \delta^2,$ for each $k.$

Thus, in the remainder of the proof we focus on proving~\eqref{eq-HPB-Yk}. It can be easily verified that, for each $k,$ 
\begin{equation}
a_k = 1- \frac{2a\mu -1}{k+k_0-1} \left(1 - \frac{2 a^2 L^2}{ (2a\mu -1)(k+k_0 -1)}\right).      
\end{equation} 
 Recalling Assumption~\ref{assum-a-mu} and the value of $k_0,$ we see that the above quantity is smaller than $1$ for each $k.$ 

Denote by $\Phi_k$ the moment generating function of $Y_k,$ and by $\Phi_{k+1|k}(\cdot; X_k)$ the moment generating function of $Y_k$ conditioned on $X_k:$ 
\begin{align}
\Phi_k(\nu) & : =\mathbb E \left[\exp(\nu Y_k)\right]\\
\Phi_{k+1|k}(\nu; X_k) & : = \mathbb E \left[\left.\exp(\nu Y_k)\right|X_k\right],
\end{align} 
for $\nu \in \mathbb R;$ note that $\Phi_{k+1}(\nu) = \mathbb E\left[\Phi_{k+1|k}(\nu; X_k)\right],$ for each $\nu \in \mathbb R.$ From the recursion~\eqref{eq-recursion-xi}, we have:
\begin{align}
& \Phi_{k+1|k}(\nu; X_k)  = \nonumber \\
& \exp( a_k \nu Y_k) \mathbb E\left[  \left.\exp( - b_k \sqrt {k+k_0-1}(X_k-x^\star)^\top Z_k + c_k \|Z_k\|^2) \right|X_k\right] \nonumber \\
& \leq  \exp( a_k \nu Y_k)\, \left( \mathbb E\left[  \left.\exp( - 2 b_k \nu \sqrt {k+k_0-1}(X_k-x^\star)^\top Z_k)  \right| X_k\right] \right)^{1/2}  \times \nonumber \\
& \,\,\,\,\,\,\,\left( \mathbb E\left[  \left.\exp(2 c_k \nu \|Z_k\|^2) \right|X_k\right] \right)^{1/2}\nonumber\\
& \leq \exp( a_k \nu Y_k)\, \exp(2 b_k^2 \nu^2 Y_k)\, \left( \mathbb E\left[ \left. \exp(2 c_k \nu \|Z_k\|^2) \right|X_k\right] \right)^{1/2}
\end{align}
Recalling~\eqref{part-LMGF-Z_k-squared}, the last term is finite for $\nu \leq 1/(2 a^2 C_2) =: B_0,$ and for such $\nu$, the corresponding value is equal to $\exp(C_2 c_k \nu )$. Thus, for each $\nu \leq B_0,$
\begin{equation}
\label{eq-Phi-key-inequality}
\Phi_{k+1|k}(\nu; X_k) \leq  \exp(\nu (a_k + 2 b_k^2 \nu) Y_k + C_2 c_k \nu). 
\end{equation}

It is easy to see that  $B\leq B_0.$ Consider $\nu \leq B$. Taking the expectation on both sides of~\eqref{eq-Phi-key-inequality}, the following recursive inequality on $\Phi_k$ is obtained for any $\nu \leq B$ and any $k\geq 1:$
\begin{equation}
\label{eq-Phi-recursion}
\Phi_{k+1}(\nu) \leq  \Phi ((a_k + 2 b_k^2 B) \nu) \exp(C_2 c_k \nu). 
\end{equation}

From this point, the proof proceeds similarly as in~\cite{Harvey2019}, i.e., by induction, and using $k=1$ as the base, it can be shown that, for each $\nu \leq B,$
\begin{equation}
\label{eq-MGF-bound-Y-k}
\Phi_{k}(\nu) \leq e^{\frac{\nu}{B}}.    
\end{equation}
By exponential Markov, from~\eqref{eq-MGF-bound-Y-k}, for each $\nu \leq B,$
\begin{equation}
\mathbb P\left(Y_k\geq \epsilon\right)   \leq \mathbb E\left[\exp{ \nu Y_k} e^{-\nu \epsilon} \right]. 
\end{equation}
Taking $\nu = B$ yields the desired result.
\end{proof}

\section*{Appendix C.}
\label{app:proof-Lambda-sum-equals-Lambda-integral}

\begin{proof} [Proof of~\eqref{eq-limit-Lambda-sum-equals-integral} ]

Introduce step-wise constant function $s_k: [0,1]\mapsto \mathbb R$, defined by
\begin{equation}
s_k(\theta)=\left\{
\begin{array}{cc}
\Lambda (\alpha_l \eta_l; x^\star), & \mathrm{for\,} \frac{l-1}{k} < \theta \leq \frac{l}{k} \\
0   , & \mathrm{for\,}\theta=0 
\end{array}
\right..    
\end{equation}

It is easy to verify that the integral of $s_k$ over $[0,1]$ equals the desired sum in the right hand-side of~\eqref{eq-Lambda-sum-in-RHS}, i.e.,
\begin{equation}
\int_0^1 s_k(\theta) = \frac{1}{k}\sum_{l=1}^k \Lambda (\alpha_l \eta_l; x^\star).    
\end{equation}
We next show that
\begin{equation}
\label{eq-limit-s-k}
\lim_{k\rightarrow +\infty} s_k(\theta) = \Lambda (a Q D(\theta)Q^\top \lambda; x^\star), 
\end{equation}
where $D(\theta)$ is as defined in the claim of the theorem. To show the preceding limit, note that, for each $\theta\in (0,1]$, 
\begin{equation}
s_k(\theta) = \Lambda ( k \alpha_{l_k} Q D_{k,l_k} Q^\top \lambda; x^\star)
\end{equation}
where $[D_{k,l_k}]_{ii} = \beta_{k,l_k}(\rho_i,0),$ $l_k$ is the index of the interval in the definition of $s_k$ to which $\theta$ belongs, $l_k=\min\{l=1,..,k: \theta \leq \frac{l}{k}\},$ and $\rho_i$ is, we recall, the $i$-th eigenvalue of $H^\star.$

Using the bounds from Lemma~\ref{lemma-beta-bounds}, it is easy to establish the by sandwiching argument that 
\begin{equation}
\lim_{k\rightarrow \infty}  k\alpha_{l_k} \beta_{k,l_k} (\rho_i,0)= a \theta^{a\rho_i-1}.
\end{equation}
The limit in~\eqref{eq-limit-s-k} now follows by the continuity of $\Lambda(\cdot;x^\star),$ which follows by convexity of $\Lambda(\cdot; x^\star),$ Lemma~\ref{lemma-properties-Lambda}.

Using the fact that $s_k$ can be uniformly bounded for all $k$ and $\theta \in [0,1]$, we can exchange the order of the limit and the integral, to obtain: 
\begin{equation}
\lim_{k\rightarrow +\infty} \int_0^1 s_k(\theta) d \theta =  \int_0^1 \lim_{k\rightarrow +\infty} s_k(\theta) d \theta =  \int_0^1 \Lambda (a Q D(\theta)Q^\top \lambda; x^\star),    
\end{equation}
establishing the claim of the lemma.
\end{proof}

\section*{Appendix D.}
\label{app:derivations-from-remarks}

\begin{proof} [Derivations with Remark~6]
Consider function $\overline{\Psi}(\lambda) = \Psi^\star(\lambda)+r(\lambda)$ in Lemma~\ref{lemma-LMGF-limsup}.  
 We derive here  a lower bound on 
 rate function $\overline{I}$ in Theorem~~\ref{theorem-LDP-general} that does not explicitly depend on $H(x^\star)$. 
 In view of the fact that $\overline{I}$ is the Fenchel-Legendre transform of $\overline{\Psi}$, a lower bound on $\overline{I}$ is readily obtained by deriving 
 an upper bound on $\overline{\Psi}(\lambda).$ Note that $r(\lambda)$ does not explicitly depend on $H(x^\star)$, hence we only need to 
 derive an upper bound on $\Psi^\star$. 
 By Assumption~\ref{assumption-subGaussian}, we have, for any $\theta \in [0,1]$, that 
 $\Lambda(a Q D(\theta) Q^\top \lambda; x^\star)
  \leq \frac{C_1\,a^2}{2} \lambda^\top (Q D Q^\top)^2 \lambda $
  $\leq \frac{C_1\,a^2}{2}\|\lambda\|^2\,\|D(\theta)\|^2$, 
  where we recall that $\|\cdot\|$ denotes the 2-norm
   of its vector or matrix argument. 
  Next, note that 
  $\|D(\theta)\| \leq \theta^{a\,\mu-1}$, for all $\theta \in [0,1]$, because 
  all eigenvalues $\rho_i$'s of $H(x^\star)$ belong 
  to the interval $[\mu,L]$. 
  Therefore, we obtain:
  \begin{eqnarray*}
  &\,& \Psi^\star(\lambda) \leq \frac{C_1\,a^2}{2}\|\lambda\|^2 
  \int_0^1 \theta^{2 a\,\mu-2}d \theta 
  = \frac{C_1 \,a^2 }{2(2\,a\mu-1)}\|\lambda\|^2.
  \end{eqnarray*}
  \end{proof}
  
\begin{proof} [The case of random initial iterate $X_1$]
Recall that, by definition, $\Gamma_1(\lambda) = \mathbb E\left[e^{\lambda^\top(X_1-x^\star)}\right].$ When $X_1$ is random, $\Gamma_1$, as a function of $\lambda,$ is therefore the log-moment generating function of $X_1-x^\star.$ Provided its domain is $\mathbb R^d,$ all arguments in the proof of Theorem~\ref{theorem-LDP-general} remain the same. In particular, in eq.~\eqref{eq-unwinded-recursion}, the factor $e^{\|\lambda\|\|X_1-x^\star\|}$ would be replaced by a (finite-valued) function (of $\lambda$), and the subsequent results would be unaltered; a similar comment applies for the statement and the proof of Lemma~\ref{lemma-HPB-Yk}.  
\end{proof}

\section*{Appendix E.}
\label{app:proof-for-quadratic}

\begin{proof}[Proof of Theorem~\ref{theorem-quadratic-full-LDP}]
It is easy to show that for the assumed quadratic form, the iterates $X_k$ have the following representation:
\begin{equation}
X_{k+1} =    A_{k0}X_1  + \sum_{l=1}^k \alpha_l A_{k,l+1}Z_l,
\end{equation}
where $A_{k,l} = \prod_{j=l}^k (I-\alpha_j H)$. By the assumption that the noise realizations at different times are independent and with a constant distribution, we obtain:
\begin{equation}
\label{eq-MGF-X-k-quadratic}
\Gamma_{k+1} (\lambda)= e^{\lambda^\top X_1 }  e^{\sum_{l=1}^k \Lambda( \alpha_l A_{k,l+1} \lambda)}.     
\end{equation}
The proof now follows from Lemma~\ref{lemma-GE-auxilliary} and the limit established in~\ref{eq-limit-Lambda-sum-equals-integral}.
\end{proof}

\section*{Appendix F. Numerical results}
\label{app:numerical-results}

We now illustrate the achieved results through a numerical simulation. 
We consider a strongly convex quadratic cost function $f: {\mathbb R}^d \rightarrow \mathbb R$, defined by $f(x) =\frac{1}{2}x^\top A x + b x$, $d=10$, where the symmetric $d \times d$ matrix $A$ and the $d \times 1$ vector $b$ are generated randomly. Specifically, we generate the entries of $b$ mutually independently, according to the standard normal distribution. The matrix $A$ is generated as follows. We let $A = Q \Lambda Q^\top$, where $Q$ is the matrix whose columns are the orthonormal eigenvectors of matrix $(B+B^\top)/2$, and the entries of $B$ are drawn mutually independently from the standard normal distribution; the matrix $\Lambda$ is the diagonal matrix whose diagonal entries are drawn from the uniform distribution on the interval $[1,2]$. Clearly, the optimal solution for the problem equals $x^\star = A^{-1}b$.

We consider the gradient noise that is generated in an i.i.d. manner over iterations and over the gradient noise vector elements, independently from the solution iterate sequence. Two different noise distributions per gradient noise entry are considered, such that the per-entry noise variance is kept equal for the two distributions, equal to $\sigma^2$. In this way, we evaluate the effects of higher order moments on the performance of SGD. The first distribution is zero-mean Gaussian with variance $\sigma^2$. The second distribution is the zero-mean Laplacian with the same variance. We set $\sigma^2=0.04.$

We numerically estimate, via Monte Carlo simulations, the probability $P \left( \|X_k-x^\star\| > \delta\right)$ along iterations~$k=1,2,...$ We denote the corresponding numerical estimate by~${p}_k$. 
 Two different values of $\delta$ are considered, $\delta = 0.3$,  and $\delta = 0.03$. For each Monte Carlo run, $X_1$ is set to the zero vector. For the numerical example here, 
 $\|x^\star\|=2.342$, and hence $\delta = 0.3$ corresponds to the relative 
 error level  
 $\delta/\|x^\star\| \approx 0.13$, while $\delta = 0.03$  
 corresponds to $\delta/\|x^\star\| \approx 0.013$. 
  Figure~1 plots ${p}_k$ versus iteration counter $k$ 
 (in linear scale for the horizontal axis, and $\mathrm{log}_{10}$-scale for the vertical axis) for the Gaussian noise case (blue line) and 
 the Laplacian noise case (red line). 
 The top Figure is for $\delta = 0.3$, and 
 the bottom Figure is for $\delta=0.03$. We can see that, 
 for a large value of $\delta$, the two curves 
 are very different: the Laplacian gradient noise case 
 leads to a worse performance. 
  This is because, for large $\delta$, 
  the argument $\lambda$ of the LMGF $\Lambda$  
  that corresponds to the minimizer in 
  the rate function value $I^\star$ is large (see Theorem~4), and hence 
  higher order polynomial coefficients ($\sim \lambda^3$ and higher)
   play a significant role. As the higher order moments of the Gaussian and Laplace distributions are very different (equal to zero for the Gaussian and strictly positive for the Laplacian), the result is 
   the different large deviations performance (worse for the Laplacian case) as seen in Figure~1, top. 
    On the other hand, for a small value of $\delta$ (bottom Figure), 
    the argument  $\lambda$ of the LMGF that corresponds to 
    the minimizer in the rate function expression $I^\star$ is small, 
    and hence only the first two order polynomial coefficients 
    of $\Lambda$ play a significant role. As the two distributions here are both zero mean and have equal variance (hence having equal first and second order moments), the large deviation performance for the two noises 
    matches, as seen in Figure~1, bottom. This behavior 
    is in accordance with the theory derived.

 \begin{figure}[thpb]
      \centering
      \includegraphics[width=9cm, angle=-90]{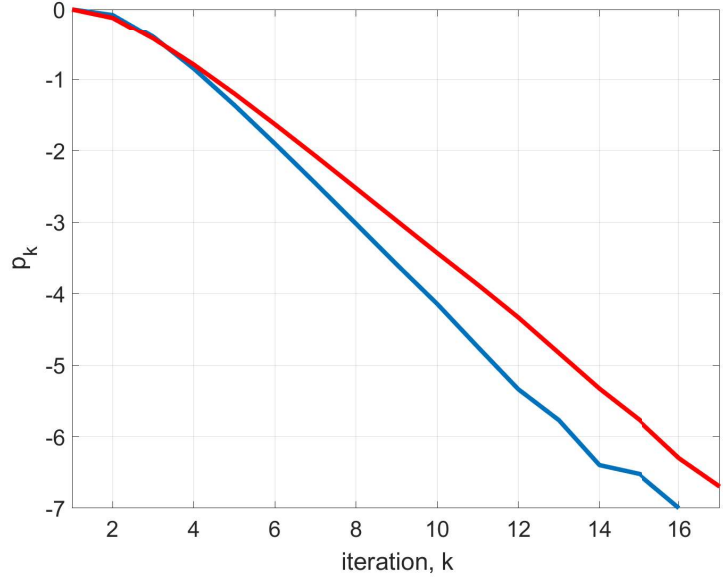}
     \includegraphics[width=9cm, angle=-90]{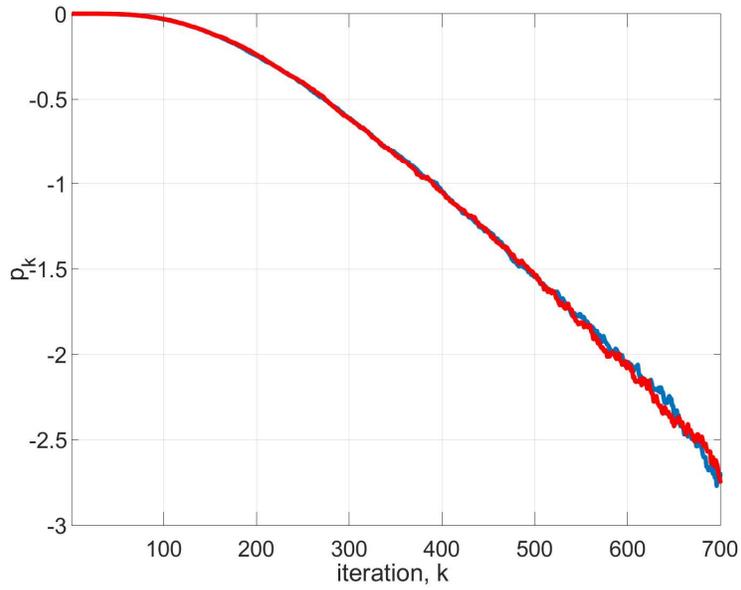}
    \caption{Monte Carlo estimate of $P \left( \|X_k-x^\star\| > \delta\right)$ along iterations~$k=1,2,...$ for 
   SGD with Gaussian (blue line) and Laplacian (red line) gradient noise with equal per-entry variance $\sigma^2=0.04$. 
  Top Figure: $\delta=0.3$; Bottom Figure: $\delta=0.03$.}
\end{figure}

\vfill

\end{document}